\numberwithin{equation}{section}
\theoremstyle{plain}
\newtheorem{theorem}{Theorem}
\newtheorem{lemme}{Lemma}
\newtheorem*{assumption}{}
\newcommand{\Dn}{\mathscr{D}_{n}}
\newcommand{\bX}{\textbf{X}}
\newcommand{\bTheta}{\boldsymbol{\Theta}}
\newcommand{\bx}{\textbf{x}}
\renewcommand{\P}{\mathds{P}}
\newcommand{\R}{\mathds{R}}
\newcommand{\E}{\mathbb{E}}
\newcommand{\V}{\mathbb{V}}
\newcommand{\bU}{\mathcal{U}_{n,K}}
\newcommand{\bSh}{\mathbf{\hat{Sh}}}
\newcommand{\INDSTATE}[1][1]{\STATE\hspace{#1\algorithmicindent}}
\begin{document}

% If your paper is accepted and the title of your paper is very long,
% the style will print as headings an error message. Use the following
% command to supply a shorter title of your paper so that it can be
% used as headings.
%
%\runningtitle{I use this title instead because the last one was very long}

% If your paper is accepted and the number of authors is large, the
% style will print as headings an error message. Use the following
% command to supply a shorter version of the authors names so that
% they can be used as headings (for example, use only the surnames)
%
%\runningauthor{Surname 1, Surname 2, Surname 3, ...., Surname n}

\twocolumn[

\aistatstitle{SHAFF: Fast and consistent \underline{SHA}pley e\underline{F}fect estimates via random \underline{F}orests}

\aistatsauthor{Clément Bénard$^{1,2}$ \And Gérard Biau$^{2}$ \And Sébastien Da Veiga$^{1}$ \And Erwan Scornet$^{3}$}

\aistatsaddress{ \\[-0.5em] $^{1}$ Safran Tech, Digital Sciences \& Technologies, 78114 Magny-Les-Hameaux, France \\ 
$^{2}$Sorbonne Université, CNRS, LPSM, 75005 Paris, France \\
$^{3}$Ecole Polytechnique, IP Paris, CMAP, 91128 Palaiseau, France } 

\runningauthor{C. Bénard, G. Biau, S. Da Veiga, E. Scornet}

]

\begin{abstract}
    Interpretability of learning algorithms is crucial for applications involving critical decisions, and variable importance is one of the main interpretation tools. 
    Shapley effects are now widely used to interpret both tree ensembles and neural networks, as they can efficiently handle dependence and interactions in the data, as opposed to most other variable importance measures. 
    However, estimating Shapley effects is a challenging task, because of the computational complexity and the conditional expectation estimates. Accordingly, existing Shapley algorithms have flaws: a costly running time, or a bias when input variables are dependent. Therefore, we introduce \textbf{SHAFF}, \textbf{SHA}pley e\textbf{F}fects via random \textbf{F}orests, a fast and accurate Shapley effect estimate, even when input variables are dependent. We show \textbf{SHAFF} efficiency through both a theoretical analysis of its consistency, and the practical performance improvements over competitors with extensive experiments. An implementation of \textbf{SHAFF} in \texttt{C++} and \texttt{R} is available online.
\end{abstract}

\section{Introduction}

State-of-the-art learning algorithms are often qualified as black boxes because of the high number of operations required to compute predictions. This complexity prevents us from grasping how inputs are combined to generate the output, which is a strong limitation for many applications, especially those with critical decisions at stake---healthcare is a typical example. For this reason, interpretability of machine learning has become a topic of strong interest in the past few years. One of the main tools to interpret learning algorithms is variable importance, which enables to identify and rank the influential features of the problem.
Recently, Shapley effects have been widely accepted as a very efficient variable importance measure since they can equitably handle interactions and dependence within input variables \citep{owen2014sobol, vstrumbelj2014explaining, iooss2017shapley, lundberg2017unified}.
Shapley values were originally defined in economics and game theory \citep{shapley1953value} to solve the problem of attributing the value produced by a joint team to its individual members. The main idea is to measure the difference of produced value between a subset of the team and the same subteam with an additional member. For a given member, this difference is averaged over all possible subteams and gives his Shapley value. Recently, \citet{owen2014sobol} adapted Shapley values to variable importance in machine learning, where an input variable plays the role of a member of the team, and the produced value is the explained output variance. In this context, Shapley values are now called Shapley effects, and are extensively used to interpret both tree ensembles and neural networks. Next, \citet{lundberg2017unified} also introduced SHAP values to adapt Shapley effects to local importance measures, which break down the contribution of each variable for a given prediction. We focus on Shapley effects throughout the article, but our approach can be easily adpated to SHAP values as they share the same challenges.

The objective of variable importance is essentially to perform variable selection. More precisely, it is possible to identify two final aims \citep{genuer2010variable}: (i) find a small number of variables with a maximized accuracy, or (ii) detect and rank all influential variables to focus on for interpretation and further exploration with domain experts. The following example illustrates that different strategies should be used depending on the targeted objective: if two influential variables are strongly correlated, one must be discarded for objective (i), while the two of them must be kept in the second case. Indeed, if two variables convey the same statistical information, only one should be selected if the goal is to maximize the predictive accuracy with a small number of variables, i.e., objective (i). On the other hand, these two variables may be acquired differently and represent distinct physical quantities. Therefore, they may have different interpretations for domain experts, and both should be kept for objective (ii). 
Shapley effects are a relevant measure of variable importance for objective (ii), because they equitably allocate contributions due to interactions and dependence across all input variables.

The main obstacle to estimate Shapley effects is the computational complexity. The first step is to use a learning algorithm to generalize the relation between the inputs and the output. Most existing Shapley algorithms are agnostic to the learning model. \citet{lundberg2018consistent} open an interesting route by restricting their algorithm to tree ensembles, in order to develop fast greedy heuristics, specific to trees. Unfortunately, as mentioned by \citet{aas2019explaining}, the algorithm is biased when input variables are dependent. In the present contribution, we propose a Shapley algorithm tailored for random forests, well known for their good behavior
on high-dimensional or noisy data, and their robustness. Using the specific structure of random forests, we develop \textbf{SHAFF}, a fast and accurate estimate of Shapley effects.

\paragraph{Shapley effects.}
To formalize Shapley effects, we introduce a standard regression setting with an input vector $\bX = (X^{(1)}, \hdots, X^{(p)}) \in \R^p$, and an output $Y \in \R$. We denote by $\smash{\bX^{(U)}}$ the subvector with only the components in $U \subset \{1,\hdots,p\}$.
Formally, the Shapley effect of the $j$-th variable is defined by
\begin{align*}
	Sh^{\star}(X^{(j)}) = & \sum_{U \subset \{1,\hdots,p\}\setminus\{j\}} \frac{1}{p} {p - 1 \choose |U|}^{-1} \frac{1}{\mathbb{V}[Y]} \\ & \times \big( \V[\E[Y|\bX^{(U\cup\{j\})}]] - \V[\E[Y|\bX^{(U)}]] \big).
\end{align*}
In other words, the Shapley effect of $X^{(j)}$ is the additional output explained variance when $j$ is added to a subset $U \subset \{1,\hdots,p\}$, averaged over all possible subsets. The variance difference is averaged for a given size of $U$ through the combinatorial weight, and then the average is taken over all $U$ sizes through the term $1/p$.
Observe that the sum has $2^{p-1}$ terms, and each of them requires to estimate $\smash{\V[\E[Y|\bX^{(U)}]]}$, which is computationally costly. Overall, two obstacles arise to estimate Shapley effects:
\begin{enumerate}
    \item the computational complexity is exponential with the dimension $p$;
    \item $\V[\E[Y|\bX^{(U)}]]$ requires a fast and accurate estimate for all variable subsets $U \subset \{1,\hdots,p\}$.
\end{enumerate}
In the literature, efficient strategies have been developed to handle these two issues. They all have drawbacks: they are either fast but with a limited accuracy, or accurate but computationally costly. We will see how \textbf{SHAFF} considerably improves this trade-off.

\paragraph{Related work.}
The computational issue of Shapley algorithms---$1.$ above---is solved using Monte-Carlo methods \citep{song2016shapley, lundberg2017unified, covert2020understanding, williamson2020efficient, covert2020improving}. In the case of tree ensembles, specific heuristics based on the tree structure enable to simplify the algorithm complexity \citep{lundberg2018consistent}.

For the second issue of conditional expectation estimates---$2.$ above, two main approaches exist: train one model for each selected subset of variables (accurate but computationally costly) \citep{williamson2020efficient}, or train a single model once with all input variables and use greedy heuristics to derive the conditional expectations (fast but limited accuracy). In the latter case, existing algorithms estimate the conditional expectations with a quite strong bias when input variables are dependent. More precisely, \citet[kernelSHAP]{lundberg2017unified}, \citet[SAGE]{covert2020understanding}, and \citet{covert2020improving} simply replace the conditional expectations by the marginal distributions, \citet{lundberg2018consistent} use a greedy heuristic specific to tree ensembles, and \citet{broto2020variance} leverage $k$-nearest neighbors to approximate sampling from the conditional distributions.
Besides, efficient algorithms exist when it is possible to draw samples from the conditional distributions of the inputs \citep{song2016shapley, aas2019explaining, broto2020variance}. However, we only have access to a finite sample in practice, and the input dimension $p$ can be large, which implies that estimating the conditional distributions of the inputs is a very difficult task. This last type of methods is therefore not really appropriate in our setting---see Table \ref{table_algo} for a summary of the existing Shapley algorithms.
\begin{table*}
    \setlength{\tabcolsep}{2pt}
	\centering
	\begin{tabular}{|c | c | c | c | c |}
		\hline \hline
        Reference & Model & \begin{tabular}{c} Local or \\ global \end{tabular} & \begin{tabular}{c} Subset \\ sampling\end{tabular} & \begin{tabular}{c} Conditional \\ expectations \end{tabular} \\ 
		\hline \hline
		\citet{song2016shapley} & all & global & \begin{tabular}{c} permutation \end{tabular} & \begin{tabular}{c} kown conditional \\ distributions \end{tabular} \\	\hline
		\citet{lundberg2017unified} & all & local & Monte-Carlo & marginals \\ \hline
		\citet{lundberg2018consistent} & tree ensembles & local  & greedy heuristic &  greedy heuristic \\ \hline
		\citet{aas2019explaining} & all & local & Monte-Carlo & \begin{tabular}{c} kown conditional \\ distributions \end{tabular} \\ \hline
		\citet{covert2020understanding} & all & global & Monte-Carlo &  marginals \\ \hline
		\citet{broto2020variance} & all & global & brute force & k-nearest neighbors \\ \hline
		\citet{williamson2020efficient} & all & global & Monte-Carlo & retrain model \\ \hline
		\citet{covert2020improving} & all & local & Monte-Carlo & marginals \\
		\hline
		\textbf{SHAFF} & random forests & global & importance sampling & projected forests \\
		\hline \hline
	\end{tabular}
	\caption{State-of-the-art of Shapley Algorithms.}
	\label{table_algo}
\end{table*}

As mentioned above, several of the presented methods provide local importance measures for specific prediction points, called SHAP values \citep{lundberg2017unified, lundberg2018consistent, covert2020improving}. Their final objective differs from ours, since we are interested in global estimates. However, SHAP values share the same challenges as Shapley effects: the computational complexity and the conditional expectation estimates, and our approach can therefore be adapted to SHAP values.
Let us also mention that several recent articles discuss Shapley values in the causality framework \citep{NEURIPS2020_0d770c49, NEURIPS2020_32e54441, janzing2020feature, pmlr-v130-wang21b}. These works have a high potential since causality is quite often the ultimate goal when one is looking for interpretations. 
However, causality methods require strong prior knowledge and assumptions about the studied system, and can therefore be difficult to deploy in specific applications.  In these cases, we argue that it is preferable to use Shapley effects---also known as global conditional Shapley values---to detect and rank influential variables, as a starting point to deepen the analysis with domain experts.

\paragraph{Outline.}
We leverage random forests to develop \textbf{SHAFF}, a fast and accurate Shapley effect estimate. Such remarkable performance is reached by combining two new features. Firstly, we improve the Monte-Carlo approach by using importance sampling to focus on the most relevant subsets of variables identified by the forest. Secondly, we develop a projected random forest algorithm to compute fast and accurate estimates of the conditional expectations for any variable subset. The algorithm details are provided in Section \ref{sec_algo}.
Next, we prove the consistency of \textbf{SHAFF} in Section \ref{sec_theory}. To our knowledge, \textbf{SHAFF} is the first Shapley effect estimate, which is both computationally fast and consistent in a general setting.
In Section \ref{sec_xp}, several experiments show the practical improvement of our method over state-of-the-art algorithms.

\section{SHAFF Algorithm} \label{sec_algo}

\paragraph{Existing approach.}
\textbf{SHAFF} builds on two Shapley algorithms: \citet[kernelSHAP]{lundberg2017unified} and \citet{williamson2020efficient}. From these approaches, we can deduce the following general three-step procedure to estimate Shapley effects.
First, a set $\bU$ of $K$ variable subsets $U \subset \{1,\hdots,p\}$ is randomly drawn. Next, an estimate $\hat{v}_n(U)$ of $\smash{\V[\E[Y|\bX^{(U)}]]}$ is computed for all selected $U$ from an available sample $\Dn = \{(\bX_1, Y_1), \hdots, (\bX_n, Y_n) \}$ of $n$ independent random variables distributed as $(\bX, Y)$.
Finally, Shapley effects are defined as the least square solution of a weighted linear regression problem. If $I(U)$ is the binary vector of dimension $p$ where the $j$-th component takes the value $1$ if $j \in U$ and $0$ otherwise, Shapley effect estimates are the minimum in $\beta$ of the following cost function:
\begin{align*}
    \ell_{n}(\beta) = \frac{1}{K} \sum_{U \in \bU} w(U) (\hat{v}_{n}(U) - \beta^T I(U))^2,
\end{align*}
where the weights $w(U)$ are given by
\begin{align*}
w(U) = \frac{p - 1}{{p \choose |U|} |U| (p - |U|)},
\end{align*}
and the coefficient vector $\beta$ is constrained to have its components sum to $\hat{v}_{n}(\{1,\hdots,p\})$.

\paragraph{Algorithm overview.}
\textbf{SHAFF} introduces two new critical features to estimate Shapley effects efficiently, using an initial random forest. Firstly, we apply importance sampling to select variable subsets $U \subset \{1,\hdots, p\}$, based on the variables frequently selected in the forest splits. This favors the sampling of subsets $U$ containing influential and interacting variables. Secondly, for each selected subset $U$, the variance of the conditional expectation is estimated with the projected forest algorithm described below, which is both a fast and consistent approach. We will see that these features considerably reduce the computational cost and the estimate error. 
To summarize, once an initial random forest is fit, \textbf{SHAFF} proceeds in three steps:
\begin{enumerate}
    \item sample many subsets $U$, typically a few hundreds, based on their occurrence frequency in the random forest (Subsection \ref{subsec_imp_samp});
    \item estimate $\V[\E[Y|\bX^{(U)}]]$ with the projected forest algorithm for all selected $U$ and their complementary sets $\{1,\hdots, p\} \setminus U$ (Subsection \ref{subsec_PRF});
    \item solve a weighted linear regression problem to recover Shapley effects (Subsection \ref{subsec_linreg}).
\end{enumerate}

\paragraph{Initial random forest.}
Prior to \textbf{SHAFF}, a random forest is fit with the training sample $\Dn$ to generalize the relation between the inputs $\bX$ and the output $Y$. A large number $M$ of CART trees are averaged to form the final forest estimate $m_{M,n}(\bx, \bTheta_{M})$, where $\bx$ is a new query point, and each tree is randomized by a component of $\bTheta_{M} = (\Theta_1,\hdots,\Theta_{\ell},\hdots,\Theta_M)$. Each $\smash{\Theta_{\ell}}$ is used to bootstrap the data prior to the $\ell$-th tree growing, and to randomly select \texttt{mtry} variables to optimize the split at each node. \texttt{mtry} is a parameter of the forest, and its efficient default value is $p/3$. In the sequel, we will need the forest parameter \texttt{min\_node\_size}, which is the minimum number of observations in a terminal cell of a tree, as well as the out-of-bag (OOB) sample of the $\ell$-th tree: the observations which are left aside in the bootstrap sampling prior to the construction of tree $\ell$. Given this initial random forest, we can now detail the main three steps of \textbf{SHAFF}.

\subsection{Importance Sampling} \label{subsec_imp_samp}

The Shapley effect formula for a given variable $X^{(j)}$ sums terms over all subsets of variables $U \subset \{1,\hdots,p\}\setminus\{j\}$, which makes $2^{p-1}$ terms, an intractable problem in most cases.
\textbf{SHAFF} uses importance sampling to draw a reasonable number of subsets $U$, typically a few hundreds, while preserving a high accuracy of the Shapley estimates. We take advantage of the initial random forest to define an importance measure for each variable subset $U$, used as weights for the importance sampling distribution.

\paragraph{Variable subset importance.}
In a tree construction, the best split is selected at each node among \texttt{mtry} input variables. Therefore, as highlighted by Proposition $1$ in \citet{scornet2015consistency}, the forest naturally splits on influential variables. \textbf{SHAFF} leverages this idea to define an importance measure for all variable subsets $U \subset \{1,\hdots,p\}$ as the probability that a given $U$ occurs in a path of a tree of the forest. 
Empirically, this means that we count the occurrence frequency of $U$ in the paths of the $M$ trees of the forest, and denote it by $\hat{p}_{M,n}(U)$.
Such approach is inspired by \citet{basu2018iterative} and \citet{benard2021sirus}. 
This principle is illustrated with the following simple example in dimension $p = 10$. Let us consider a tree, where the root node splits on variable $\smash{X^{(5)}}$, the left child node splits on variable $\smash{X^{(3)}}$, and the subsequent left child node at the third tree level, on variable $\smash{X^{(2)}}$. Thus, the path that leads to the extreme left node at the fourth level uses the following index sequence of splitting variables: $\{5, 3, 2\}$. All in all, the following variable subsets are included in this tree path: $U = \{5\}$, $U = \{3, 5\}$, and $U = \{2, 3, 5\}$. This subset extraction favors variables at the top of trees, which are more influential on $Y$, and is computationally fast.
Then, \textbf{SHAFF} runs through the forest to count the number of times each subset $U$ occurs in the forest paths, and computes the associated frequency $\hat{p}_{M,n}(U)$. If a subset $U$ does not occur in the forest, we have $\hat{p}_{M,n}(U) = 0$. Notice that the computational complexity of this step is linear, i.e., $O(Mn)$, since the number of operations is proportional to the number of nodes in the forest and there are about $n$ nodes in each fully grown tree.

\paragraph{Paired importance sampling.}
The occurrence frequencies $\hat{p}_{M,n}(U)$ defined above are scaled to sum to $1$, and then define a discrete distribution for the set of all subsets of variables $U \subset \{1,\hdots,p\}$, excluding the full and empty sets. By construction, this distribution is skewed towards the subsets $U$ containing influential variables and interactions, and is used for the importance sampling.
Finally, \textbf{SHAFF} draws a number $K$ of subsets $U$ with respect to this discrete distribution, where $K$ is a hyperparameter of the algorithm. We define $\bU$ the random set of the selected variable subsets $U$. 
For all $U \in \bU$, \textbf{SHAFF} also includes the complementary set $\{1,\hdots, p\} \setminus U$ in $\bU$, as \citet{covert2020improving} show that this ``paired sampling'' improves the final Shapley estimate accuracy.
Clearly, the computational complexity and the accuracy of the algorithm increase with $K$.
The next step of \textbf{SHAFF} is to efficiently estimate $\smash{\V[\E[Y|\bX^{(U)}]]}$ for all drawn $U \in \bU$.

\subsection{Projected Random Forests} \label{subsec_PRF}

In order to estimate $\V[\E[Y|\bX^{(U)}]]$ for the selected variable subsets $U \in \bU$, most existing methods use greedy algorithms. However, such estimates are not accurate in moderate or large dimensions when input variables are dependent \citep{aas2019explaining, sundararajan2020many}. Another approach is to train a new model for each subset $U$, but this is computationally costly \citep{williamson2020efficient}. To solve this issue, we design the projected random forest algorithm (PRF), to obtain a fast and accurate estimate of $\smash{\V[\E[Y|\bX^{(U)}]]/\V[Y]}$ for any variable subset $U \subset \{1,\hdots,p\}$.

\paragraph{PRF principle.} PRF takes as inputs the initial forest and a given subset $U$. The general principle is to project the partition of each tree of the forest on the subspace spanned by the variables in $U$, as illustrated in Figure \ref{fig_proj_CART}. Then the training data is spread across this new tree partitions, and the cell outputs are recomputed by averaging the output $Y_i$ of the observations falling in each new cell, as in the original forest. The projection enables to eliminate the variables not contained in $U$ from the tree predictions, and thus to estimate $\smash{\E[Y|\bX^{(U)}]}$ instead of $\smash{\E[Y|\bX]}$. Finally, the predictions for the out-of-bag samples are computed with the projected tree estimates, and averaged across all trees. The obtained predictions are used to estimate the targeted normalized variance $\smash{\V[\E[Y|\bX^{(U)}]]/\V[Y]}$, denoted by $\hat{v}_{M,n}(U)$.
More formally, we let $\smash{m_{M,n}^{(U, OOB)}(\bX_i^{(U)}, \bTheta_M)}$ be the out-of-bag PRF estimate for observation $i$ and subset $U$, and take
\begin{align*}
    \hat{v}_{M,n}(U) = 1 - \frac{1}{n \hat{\sigma}_Y} \sum_{i=1}^{n} \big(Y_i - m_{M,n}^{(U, OOB)}(\bX_i^{(U)}, \bTheta_M)\big)^2,
\end{align*}
where $\hat{\sigma}_Y$ is the standard estimate of $\V[Y]$.
\begin{figure*}
\centering
\begin{tikzpicture}[scale = 0.9]

\draw[->] (-0,0) -- (5.5,0);
\draw (5.5,0) node[below] {$X^{(1)}$};
\draw [->] (0,-0) -- (0,5);
\draw (0,5) node[above] {$X^{(2)}$};
\draw [color = blue, line width = 0.4 mm] (3.5,2) -- (3.5,5);
\draw [color = blue, line width = 0.4 mm] (1,2) -- (1,3.5);
\draw [color = blue, line width = 0.4 mm] (2.5,2) -- (2.5,-0);
\draw [color = blue, line width = 0.4 mm] (5.5,2) -- (-0,2);
\draw [color = blue, line width = 0.4 mm] (-0,3.5) -- (3.5,3.5);
\draw [color = blue, line width = 0.4 mm] (4.8,2) -- (4.8,-0);
\filldraw (1.5,2.5) circle[radius=2pt];
\draw (1.6,2.5) node[right] {$\bX$};
\filldraw[color = cyan] (0.4, 1.5) circle[radius=2pt];
\filldraw[color = cyan] (2.2, 0.5) circle[radius=2pt];
\filldraw[color = cyan] (1.2, 4.3) circle[radius=2pt];
\filldraw[color = green] (1.8, 3) circle[radius=2pt];
\filldraw[color = green] (3.3, 2.4) circle[radius=2pt];
\filldraw[color = green] (2.8, 3.2) circle[radius=2pt];
\filldraw[color = cyan] (5, 2.7) circle[radius=2pt];
\filldraw[color = cyan] (4.3, 0.8) circle[radius=2pt];

\draw[->, line width = 0.4 mm, color = red] (8,0) -- (13.5,0);
\draw (13.5,0) node[below] {$X^{(1)}$};
\draw [->, dashed] (8,-0) -- (8,5);
\draw (8,5) node[above] {$X^{(2)}$};
\draw [color = blue, line width = 0.4 mm, dashed] (11.5,2) -- (11.5,5);
\draw [color = blue, line width = 0.4 mm, dashed] (9,2) -- (9,3.5);
\draw [color = blue, line width = 0.4 mm, dashed] (10.5,2) -- (10.5,-0);
\draw [color = blue, line width = 0.4 mm, dashed] (13.5,2) -- (8,2);
\draw [color = blue, line width = 0.4 mm, dashed] (8,3.5) -- (11.5,3.5);
\draw [color = blue, line width = 0.4 mm, dashed] (12.8,2) -- (12.8,0);
\draw [color = red, line width = 0.2 mm] (11.5,-0) -- (11.5,5);
\draw [color = red, line width = 0.2 mm] (9,-0) -- (9,5);
\draw [color = red, line width = 0.2 mm] (10.5,5) -- (10.5,-0);
\draw [color = red, line width = 0.2 mm] (12.8,5) -- (12.8,-0);
\filldraw (9.5,2.5) circle[radius=2pt];
\draw (9.6,2.5) node[right] {$\bX$};
\draw [line width = 0.2 mm, dashed] (9.5,2.5) -- (9.5,0);
\filldraw (9.5,0) circle[radius=2pt];
\draw (9.5,0) node[above right] {$\bX^{(U)}$};
\filldraw[color = cyan, opacity = 0.3] (8.4, 1.5) circle[radius=2pt];
\filldraw[color = green, opacity = 0.3] (10.2, 0.5) circle[radius=2pt];
\filldraw[color = green, opacity = 0.3] (9.2, 4.3) circle[radius=2pt];
\filldraw[color = green, opacity = 0.3] (9.8, 3) circle[radius=2pt];
\filldraw[color = cyan, opacity = 0.3] (11.3, 2.4) circle[radius=2pt];
\filldraw[color = cyan, opacity = 0.3] (10.8, 3.2) circle[radius=2pt];
\filldraw[color = cyan, opacity = 0.3] (13, 2.7) circle[radius=2pt];
\filldraw[color = cyan, opacity = 0.3] (12.3, 0.8) circle[radius=2pt];
\filldraw[color = cyan] (8.4, 0) circle[radius=2pt];
\filldraw[color = green] (10.2, 0) circle[radius=2pt];
\filldraw[color = green] (9.2, 0) circle[radius=2pt];
\filldraw[color = green] (9.8, 0) circle[radius=2pt];
\filldraw[color = cyan] (11.3, 0) circle[radius=2pt];
\filldraw[color = cyan] (10.8, 0) circle[radius=2pt];
\filldraw[color = cyan] (13, 0) circle[radius=2pt];
\filldraw[color = cyan] (12.3, 0) circle[radius=2pt];

\end{tikzpicture}
\caption{Example of the partition of $[0,1]^2$ by a random CART tree (left side) projected on the subspace spanned by $\bX^{(U)} = X^{(1)}$ (right side). Here, $p = 2$ and $U = \{1\}$.}
\label{fig_proj_CART}
\end{figure*}
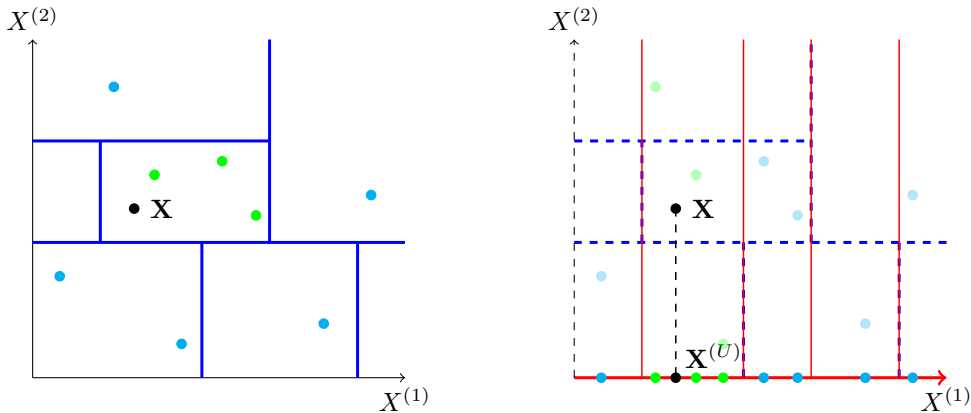

\paragraph{PRF algorithm.} The critical feature of PRF is the algorithmic trick to compute the projected partition efficiently, leaving the initial tree structures untouched. Indeed, a naive computation of the projected partitions from the cell edges is computationally very costly, as soon as the dimension increases. Instead, we simply drop observations down the initial trees, ignoring splits which use a variable outside of $U$. This enables to recover the projected partitions with an efficient computational complexity. To explain this mechanism in details, we focus on a given tree of the initial forest. Thus, the training observations are dropped down the tree, and when a split involving a variable outside of $U$ is met, data points are sent both to the left and right children nodes. Consequently, each observation falls in multiple terminal leaves of the tree. We drop the new query point $\smash{\bX^{(U)}}$ down the tree, following the same procedure, and retrieve the set of terminal leaves where $\smash{\bX^{(U)}}$ falls. Next, we collect the training observations which belong to every terminal leaf of this collection, in other words, we intersect the collection of leaves where $\smash{\bX^{(U)}}$ falls. Finally, we average the outputs $Y_i$ of the selected training points to generate the tree prediction for $\smash{\bX^{(U)}}$. Notice that such set of selected observations can be empty if $\smash{\bX^{(U)}}$ belongs to a large collection of terminal leaves. To avoid this issue, PRF uses the following strategy. Recall that a partition of the input space is associated to each tree level, and consequently, a projected tree partition can also be defined at each tree level. Thus, when $\smash{\bX^{(U)}}$ is dropped down the tree, it is stopped before reaching a tree level where it falls in an empty cell of the associated projected partition. Overall, this mechanism is equivalent to the projection of the tree partition on the subspace span by $\smash{X^{(U)}}$, because all splits on variables $\smash{X^{(j)}}$ with $\smash{j \notin U}$ are ignored, and the resulting overlapping cells are intersected---see Figure \ref{fig_proj_CART}.

\paragraph{PRF computational complexity.}
An efficient implementation of the PRF algorithm is provided by Algorithm \ref{algo_PRF} in Appendix \ref{appendix_B}. The computational complexity of PRF for all $U \in \bU$ does not depend on the dimension $p$, is linear with $M$, $K$, and quasi-linear with $n$: $O(\smash{MKn\log(n))}$. PRF is therefore faster than growing $K$ random forests from scratch, one for each subset $U$, which has an averaged complexity of $O(\smash{MKpn\log^2(n))}$ \citep{louppe2014understanding}. The computational gain of \textbf{SHAFF} can be considerable in high dimension, since the complexity of all competitors depends on $p$---see Table \ref{table_complexity} and Appendix \ref{appendix_B} for a detailed complexity analysis. As we will see in Section \ref{sec_xp}, $K=500$ is an efficient default setting for \textbf{SHAFF}, and thus, SAGE is faster for $p < 500$.
Notice that the PRF algorithm is close in spirit to a component of the Sobol-MDA \citep{benard2021mda}, used to measure the loss of output explained variance when an input variable is removed from a random forest.
In particular, a naive adaptation leads to a quadratic complexity with respect to the sample size $n$, whereas our PRF algorithm has a quasi-linear complexity, which makes it operational.
Finally, the last step of \textbf{SHAFF} is to take advantage of the estimated $\hat{v}_{M,n}(U)$ for $U \in \bU$ to recover Shapley effects.
\begin{table}
    \setlength{\tabcolsep}{2pt}
	\centering
	\begin{tabular}{|c | c |}
		\hline \hline
        Algorithm & Complexity \\ 
		\hline \hline
		\citet{covert2020understanding} & $O(Mpn\log(n))$ \\ \hline
		\citet{broto2020variance} & $O(n\log(n)p2^p)$ \\ \hline
		\citet{williamson2020efficient} & $O(\smash{Mpn^2\log^2(n))}$ \\ \hline
		\textbf{SHAFF} & $O(MKn\log(n))$ \\
		\hline \hline
	\end{tabular}
	\caption{Computational Complexity.}
	\label{table_complexity}
\end{table}

\subsection{Shapley Effect Estimates} \label{subsec_linreg}
The importance sampling introduces the corrective terms $\hat{p}_{M,n}(U)$ in the final loss function. Thus, \textbf{SHAFF} estimates $\smash{\bSh_{M,n} = (\hat{Sh}_{M,n}(X^{(1)}), \hdots, \hat{Sh}_{M,n}(X^{(p)}))}$ as the minimum in $\beta$ of the following cost function
\begin{align*}
    \ell_{M,n}(\beta) = \frac{1}{K} \sum_{U \in \bU} \frac{w(U)}{\hat{p}_{M,n}(U)} (\hat{v}_{M,n}(U) - \beta^T I(U))^2,
\end{align*}
where the sum of the components of $\beta$ is constrained to be the proportion of output explained variance of the initial forest, fit with all input variables. Finally, this can be written in the following compact form:
\begin{align*}
    \bSh_{M,n} = \underset{\beta \in [0,1]^p}{\textrm{argmin}} & \quad \ell_{M,n}(\beta) \\ 
    \textrm{s.t. }&  ||\beta||_1 = \hat{v}_{M,n}(\{1,\hdots,p\}).
\end{align*}

\section{SHAFF Consistency} \label{sec_theory}

We prove in this section that \textbf{SHAFF} is consistent, in the sense that the estimated value is arbitrarily close to the ground truth theoretical Shapley effect, provided that the sample size is large enough. To our knowledge, we provide the first Shapley algorithm which requires to fit only a single initial model and is consistent in the general case. We insist that our result is valid even when input variables exhibit strong dependences.
The consistency of \textbf{SHAFF} holds under the following mild and standard assumption on the data distribution:
\begin{assumption}[A1] \label{A1}
The response $Y \in \R$ follows
\begin{align*}
    Y = m(\bX) + \varepsilon,
\end{align*}
where $\bX = (X^{(1)}, \hdots, X^{(p)}) \in [0,1]^p$ admits a density over $[0,1]^p$ bounded from above and below by strictly positive constants, $m$ is continuous, and the noise $\varepsilon$ is sub-Gaussian, independent of $\bX$, and centered.
\end{assumption}

To alleviate the mathematical analysis, we slightly modify the standard Breiman random forests: the bootstrap sampling is replaced by a subsampling without replacement of $a_n$ observations, as it is usually done in the theoretical analysis of random forests \citep{scornet2015consistency, mentchquantifying2016}. Additionally, we follow \citet{wager2018estimation} with an additional small modification of the forest algorithm, which is sufficient to ensure its consistency. Firstly, a node split is constrained to generate child nodes with at least a small fraction $\gamma > 0$ of the parent node observations. 
Secondly, the split selection is slightly randomized: at each tree node, the number \texttt{mtry} of candidate variables drawn to optimize the split is set to $\texttt{mtry} = 1$ with a small probability $\delta > 0$. Otherwise, with probability $1 - \delta$, the default value of \texttt{mtry} is used. 
It is stressed that these last modifications are mild, since $\gamma$ and $\delta$ can be chosen arbitrarily small.

Finally, we introduce the following two assumptions on the asymptotic regime of the algorithm parameters. Assumption (A2) enforces that the tree partitions are not too complex with respect to the sample size $n$. On the other hand, Assumption (A3) states that the number of trees and the number of sampled variable subsets $U$ grow with $n$. This ensures that all possible variable subsets have a positive probability to be drawn, which is required for the convergence of our algorithm based on importance sampling. 
\begin{assumption}[A2]
The asymptotic regime of $a_n$, the size of the subsampling without replacement, and the number of terminal leaves $t_n$ are such that $a_n \leq n-2$, $a_n/n < 1 - \kappa$ for a fixed $\kappa > 0$, $\lim \limits_{n \to \infty} a_n = \infty$, $\lim \limits_{n \to \infty} t_n = \infty$, and $\lim \limits_{n \to \infty} 2^{t_n} \frac{(\log(a_n))^9}{a_n} = 0$.
\end{assumption}
\begin{assumption}[A3]
    The number of Monte-Carlo sampling $K_n$ and the number of trees $M_n$ grow with $n$, such that $M_n \longrightarrow \infty$ and $n.M_n/K_n \longrightarrow 0$.
\end{assumption}

We also let the theoretical Shapley effect vector be $\mathbf{Sh}^{\star} = (Sh^{\star}(X^{(1)}), \hdots, Sh^{\star}(X^{(p)}))$ to formalize our main result.
\begin{theorem} \label{thm_shap}
    If Assumptions (A1), (A2), and (A3) are satisfied, then \textbf{SHAFF} is consistent, that is
    \begin{align*}
        \bSh_{M_n,n} \overset{p}{\longrightarrow} \mathbf{Sh}^{\star}.
    \end{align*}
\end{theorem}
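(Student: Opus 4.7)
The plan is to recognize $\bSh_{M_n,n}$ as the argmin of an empirical objective whose population counterpart is uniquely minimized by $\mathbf{Sh}^{\star}$, then show this empirical objective converges in probability to its population limit uniformly in $\beta$ on the compact feasible set, and finally appeal to a standard argmin-continuity argument. The reformulation relies on the classical weighted least-squares characterization of Shapley values that underlies kernelSHAP: $\mathbf{Sh}^{\star}$ is the unique minimizer of $L(\beta) = \sum_{U : 0 < |U| < p} w(U)(v^{\star}(U) - \beta^T I(U))^2$ subject to $\|\beta\|_1 = v^{\star}(\{1,\hdots,p\})$, where $v^{\star}(U) = \V[\E[Y|\bX^{(U)}]]/\V[Y]$. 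Uniqueness follows because the weighted Gram matrix $\sum_U w(U) I(U) I(U)^T$, restricted to the hyperplane of the $\|\beta\|_1$ constraint, is positive definite (the standard Shapley kernel identity), so $L$ is strictly convex on a compact set.

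For each fixed $U$ with $0 < |U| \leq p$, I would then establish $\hat{v}_{M_n,n}(U) \overset{p}{\longrightarrow} v^{\star}(U)$ by adapting the consistency argument of \citet{scornet2015consistency} and \citet{wager2018estimation} to the projected forest. As explained in Section \ref{subsec_PRF}, the PRF prediction at $\bx^{(U)}$ is the average over trees of the mean of the $Y_i$'s whose projection $\bX_i^{(U)}$ falls in the projected cell containing $\bx^{(U)}$, i.e.\ in the intersection of all terminal leaves reached when splits outside $U$ are ignored. Under Assumption (A1) the density of $\bX^{(U)}$ is bounded away from $0$ and $\infty$ on $[0,1]^{|U|}$; the $\gamma$-balanced split rule combined with the $\delta > 0$ \texttt{mtry} randomization force every coordinate of $U$ to be cut a growing number of times with high probability, while Assumption (A2) controls the subsampling so that each projected cell retains a diverging number of training points. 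Continuity of $m$ and sub-Gaussianity of $\varepsilon$ then yield $L^2$-convergence of the PRF estimate to $\E[Y \mid \bX^{(U)}]$, and evaluation on the OOB sample together with a sandwich bound transforms this into $\hat{v}_{M_n,n}(U) \overset{p}{\longrightarrow} v^{\star}(U)$.

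For the uniform convergence step, the occurrence frequencies $\hat{p}_{M_n,n}(U)$ are sample averages over the $M_n$ independent trees, and Hoeffding's inequality gives $\hat{p}_{M_n,n}(U) \overset{p}{\longrightarrow} p^{\star}(U)$, with $p^{\star}(U) > 0$ for every $U$ with $0 < |U| < p$ thanks to the $\delta > 0$ \texttt{mtry} randomization and the density lower bound, which guarantee that every variable is selectable at every node with positive probability. The importance-sampling identity then yields $\E[\ell_{M_n,n}(\beta) \mid \text{forest}] = \sum_U w(U)(\hat{v}_{M_n,n}(U) - \beta^T I(U))^2$, which converges pointwise in $\beta$ to $L(\beta)$ by the preceding step, while a Bernstein-type bound over the $K_n$ Monte Carlo draws controls the remaining fluctuation at rate $K_n^{-1/2}$; the scaling $n M_n / K_n \to 0$ in Assumption (A3) ensures that this Monte Carlo error is asymptotically dominated by the forest error. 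Lipschitz continuity of the quadratic loss in $\beta \in [0,1]^p$ upgrades pointwise to uniform convergence via a covering argument, and strict convexity of $L$ then allows us to conclude $\bSh_{M_n,n} \overset{p}{\longrightarrow} \mathbf{Sh}^{\star}$ by the standard argmin-continuity lemma for constrained convex programs.

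The hard part will be the projected random forest consistency in the second paragraph: the projected cell at $\bx^{(U)}$ is the intersection of many overlapping leaves of the original partition, and proving that this intersection simultaneously shrinks in diameter and retains a growing number of training points under arbitrary dependence between the coordinates of $\bX$ requires the density lower bound, the balance parameter $\gamma$, and the randomization parameter $\delta$ to be exploited together. Existing forest consistency results do not handle such projection operations directly, so this is the most delicate technical component of the argument; everything else—reformulation, frequency convergence, Monte Carlo control, and argmin continuity—is comparatively standard once PRF consistency is in hand.
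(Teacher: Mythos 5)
Your proposal follows essentially the same route as the paper's proof: consistency of the projected forest estimate for each fixed $U$ (adapting the argument of Scornet et al.\ through shrinking cell diameters under the $\gamma$-balance and $\delta$-randomization modifications), eventual positivity of the occurrence frequencies $\hat{p}_{M_n,n}(U)$ from the $\delta>0$ \texttt{mtry} randomization, control of the importance-sampling fluctuation via the scaling $nM_n/K_n \to 0$, and finally uniform convergence of the convex empirical loss on the compact feasible set combined with the standard argmin-consistency lemma, the unique population minimizer being identified as $\mathbf{Sh}^{\star}$ through the kernelSHAP weighted least-squares characterization. The only inessential overstatement is the claim that $\hat{p}_{M_n,n}(U)$ converges to a fixed positive limit $p^{\star}(U)$ --- the per-tree occurrence probability depends on $n$ and need not converge --- but your argument only actually uses eventual positivity together with the $1/(nM_n)$ lower bound on nonzero frequencies, which is exactly what the paper's Lemma 1 and the proof of its Lemma 3 establish.
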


The proof of Theorem \ref{thm_shap} relies on the following three lemmas.
Lemma $1$ states that all variable subsets $U$ have a positive probability to be drawn asymptotically, which ensures that the importance sampling approach can converge.
Lemma $2$ establishes the consistency of the projected forest estimate, and the proof uses arguments from \citet{gyorfi2006distribution} to control both the approximation and estimation errors.
Lemma $3$ combines the two previous lemmas to state the convergence of the loss function of the weigthed regression problem solved to recover Shapley effect estimates. The proofs of these lemmas are gathered in Appendix \ref{appendix_C}.
\begin{lemme} \label{lemma_proba}
    If Assumptions (A2) and (A3) are satisfied, for all $U \subset \{1, \hdots, p\}$, we have
    \begin{align*}
        \P\big( \hat{p}_{M_n,n}(U) > 0 \big) \longrightarrow 1.    
    \end{align*}
\end{lemme}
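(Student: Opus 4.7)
The plan is to prove a single-tree lower bound and then exploit the independence of the trees' randomizations. Let $A_U^{(\ell)}$ denote the event that the variable subset $U$ appears along some root-to-leaf path of tree $\ell$. Since $\Theta_1,\ldots,\Theta_{M_n}$ are i.i.d.\ and independent of $\Dn$, the events $A_U^{(\ell)}$ are i.i.d.\ conditional on $\Dn$, and
\begin{align*}
\P\bigl(\hat p_{M_n,n}(U)>0\bigr) \;=\; 1-\E\bigl[(1-\P(A_U^{(1)}\mid \Dn))^{M_n}\bigr].
\end{align*}
It therefore suffices to exhibit a constant $q>0$ with $\P(A_U^{(1)}\mid \Dn)\ge q$ almost surely for all $n$ large; combined with $M_n\to\infty$ from Assumption (A3), this yields the lemma.

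To obtain such a $q$, I would exploit the additional randomization of the modified forest. Fix any ordering $U=\{j_1,\ldots,j_{|U|}\}$ and, in the first tree, consider the leftmost chain of descendants $v_0,v_1,\ldots,v_{|U|-1}$ of the root ($v_0$ is the root and $v_{k+1}$ is the left child of $v_k$). At each node, the algorithm sets $\texttt{mtry}=1$ with probability $\delta$ independently of everything else, and conditional on this event draws a single candidate variable uniformly on $\{1,\ldots,p\}$. Consequently, conditional on $v_k$ being an internal node, its split variable equals $j_{k+1}$ with probability at least $\delta/p$, independently across $k$.

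What remains is to argue that $v_0,\ldots,v_{|U|-1}$ are actually split. The $\gamma$-fraction constraint ensures that $v_k$ inherits at least $\gamma^k a_n$ subsampled observations, and Assumption (A2) supplies $a_n\to\infty$ and $t_n\to\infty$; since $p$ and $|U|$ are fixed, for $n$ large $\gamma^{|U|-1} a_n$ exceeds $\texttt{min\_node\_size}$ and the tree has budget to extend the leftmost chain to depth $|U|$. Combining the two observations, for $n$ large one obtains the deterministic lower bound
\begin{align*}
\P(A_U^{(1)}\mid \Dn) \;\ge\; \bigl(\delta/p\bigr)^{|U|} \;=:\; q \;>\; 0,
\end{align*}
which completes the plan.

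The hard part is the last step: showing that, regardless of how the algorithm selects which node to expand next (greedy CART impurity reduction, breadth-first fill to $t_n$ leaves, etc.), the leftmost descendants $v_0,\ldots,v_{|U|-1}$ are genuinely internal nodes. A robust fallback, if this proves fragile, is to use the deterministic fact that any binary tree with $t_n$ leaves has maximum depth at least $\lceil\log_2 t_n\rceil\to\infty$, condition on this maximally deep path, and rely on the independence of the node-level $\texttt{mtry}=1$ draws along it: since $\log_2 t_n$ eventually dominates $|U|$, a coupon-collector argument guarantees that all $|U|$ indices in $U$ appear among the split variables of that path with probability tending to $1$.
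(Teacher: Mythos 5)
Your proposal is correct and follows essentially the same route as the paper: reduce to a single-tree bound via the conditional independence of the trees given $\Dn$, use the $\texttt{mtry}=1$ randomization to lower-bound the probability that some path splits successively on the variables of $U$ by a constant of the form $(\delta/p)^{|U|}$, and conclude from $M_n\to\infty$. Your ``fallback'' is in fact what the paper does: it secures the existence of a path with at least $p$ splits from $t_n\to\infty$ (or $n>s\,2^{p-1}$) rather than tracking the leftmost chain of descendants.
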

\begin{lemme} \label{lemma_proj}
    If Assumptions (A1) and (A2) are satisfied, the PRF is consistent, that is, for all $M \in \mathbb{N}^{\star}$ and $U \subset \{1,\hdots,p\}$, 
    \begin{align*}
        \hat{v}_{M,n}(U) \overset{p}{\longrightarrow} \V[\E[Y|\bX^{(U)}]]/\V[Y] \overset{\rm def}{=} v^{\star}(U).
    \end{align*}
\end{lemme}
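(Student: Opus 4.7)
Since Lemma \ref{lemma_proj} concerns a single random-forest-based estimator with no importance sampling involved, the plan is to reduce it to the $L^2$-consistency of the projected random forest (PRF) predictor for $m^{(U)}(\bx) := \E[Y \mid \bX^{(U)} = \bx]$, and then to invoke the law of total variance. The proof splits into an algebraic reformulation, a predictor-consistency step, and a transfer of that consistency to the empirical out-of-bag mean squared error.

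The reformulation uses the identity $\V[Y] = \V[m^{(U)}(\bX^{(U)})] + \E[(Y - m^{(U)}(\bX^{(U)}))^2]$ to rewrite the target as
\begin{align*}
v^{\star}(U) = 1 - \frac{\E[(Y - m^{(U)}(\bX^{(U)}))^2]}{\V[Y]}.
\end{align*}
Comparing with the definition of $\hat{v}_{M,n}(U)$, it suffices to prove $\hat{\sigma}_Y \overset{p}{\to} \V[Y]$ (immediate by the weak law of large numbers under Assumption (A1)) and that the empirical OOB MSE converges in probability to $\E[(Y - m^{(U)}(\bX^{(U)}))^2]$. A continuous mapping argument then yields the conclusion.

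The predictor-consistency step is the heart of the argument and follows the framework of \citet{gyorfi2006distribution} for data-dependent partition estimators. I would bias--variance decompose the $L^2$-risk of the PRF predictor. Under Assumption (A1), $m^{(U)}$ inherits continuity from $m$ on the compact cube, so the bias vanishes as soon as one shows that the projected cells have diameters converging to zero in probability; this follows from the randomisation device that sets $\texttt{mtry} = 1$ with probability $\delta > 0$ (which guarantees that at every node each variable in $U$ is positively likely to be split on), combined with the growth $t_n \to \infty$ and the child-size constraint $\gamma > 0$. The estimation term is controlled by the uniform deviation bounds of \citet{gyorfi2006distribution} for data-dependent partitions; the complexity budget $2^{t_n}(\log a_n)^9/a_n \to 0$ in Assumption (A2), together with the sub-Gaussian noise, is exactly what drives those deviations to zero. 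Because $a_n/n \leq 1 - \kappa$, a positive fraction of trees remain out-of-bag for each observation, which preserves the independence structure needed to transfer predictor consistency to consistency of the empirical OOB MSE; concentration of the empirical average around its expectation is then a routine $L^{1}$-argument. The main obstacle, and where the proof must depart from the standard Györfi template, is the bias control: after projection, each cell of the PRF partition is an intersection of overlapping strips of the original partition, and a tree that happens to split only on variables outside $U$ projects to the trivial partition; one must therefore argue that the $\texttt{mtry}=1$ randomisation forces enough in-$U$ splits at every tree level so that the $U$-diameter of the projected cells shrinks at the required rate.
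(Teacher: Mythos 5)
Your proposal is correct in outline and follows the paper's strategy: rewrite $v^{\star}(U)=1-\E[(Y-\E[Y|\bX^{(U)}])^2]/\V[Y]$, reduce the lemma to the $\mathbb{L}^2$-consistency of the out-of-bag PRF predictor (the paper handles the OOB transfer by citing Lemma~2 of \citet{benard2021mda} rather than re-deriving it from $a_n/n<1-\kappa$, but the idea is the same), and then control the estimation error with the Gy\"orfi-type bounds that Assumption~(A2) is calibrated for, exactly as in the adaptation of Theorem~1 of \citet{scornet2015consistency}.

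The one place where you take a genuinely different, and needlessly harder, route is the approximation error. You flag as the ``main obstacle'' the need to show directly that the $\texttt{mtry}=1$ randomisation forces enough in-$U$ splits so that the $U$-diameter of the projected cells shrinks, worrying about trees that split only outside $U$ and project to the trivial partition. The paper avoids this accounting entirely with an inclusion argument: the projected cell containing $\bX^{(U)}$ is, by construction, the intersection of all leaves reached when non-$U$ splits are ignored, and one of those leaves is the original cell $A_n(\bX,\Theta)$ containing the full point $\bX$; hence the projected cell is contained in the projection of $A_n(\bX,\Theta)$, and its diameter is bounded by $\mathrm{diam}(A_n(\bX,\Theta))$. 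Lemma~2 of \citet{meinshausen2006quantile}, which is precisely where the $\gamma$ and $\delta$ modifications are consumed, gives $\mathrm{diam}(A_n(\bX,\Theta))\to 0$, and continuity of $m$ (hence of $\E[m(\bX)|\bX^{(U)}=\cdot\,]$ under (A1)) finishes the bias term. So the obstacle you identify is real but is dissolved by the inclusion rather than solved by a per-coordinate splitting argument; your direct route would essentially amount to re-proving Meinshausen's lemma restricted to the coordinates in $U$. It is not a gap, but if you pursue your version you should also address the early-stopping rule of PRF (the query point is halted one level before its projected cell empties), which the inclusion argument absorbs painlessly since stopping earlier only enlarges the cell within a partition whose full-dimensional diameters already vanish uniformly in the tree depth used.
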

We let $Z$ be a discrete random variable taking values in the set of all subsets of $\{1, \hdots, p\}$, excluding the full and empty sets. The discrete distribution of $Z$ is given by the weights $w(U)$ (the weights are scaled to sum to 1).
\begin{lemme} \label{lemma_loss}
    If Assumptions (A1), (A2), and (A3) are satisfied, we have
    \begin{align*}
        \ell_{M,n}(\beta) \overset{p}{\longrightarrow} \E[(v^{\star}(Z) - \beta^T I(Z))^2] \overset{\rm def}{=} \ell^{\star}(\beta).
    \end{align*}
\end{lemme}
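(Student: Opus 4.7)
The plan is to decompose $\ell_{M_n,n}(\beta)-\ell^\star(\beta)$ into two contributions, one controlled by the Monte-Carlo randomness over $\mathcal{U}_{n,K}$ and the other by the randomness of the forest and the data, and to handle each separately, exploiting the finiteness of the collection of subsets $U\subset\{1,\hdots,p\}$.

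First I would condition on $\bTheta_{M_n}$ and $\mathscr{D}_n$. Given the forest, the sampled subsets $U_1,\hdots,U_{K_n}$ are i.i.d.\ with law $\hat p_{M_n,n}$, so I can write
\begin{align*}
\ell_{M_n,n}(\beta) = \frac{1}{K_n}\sum_{j=1}^{K_n} g(U_j), \quad g(U) = \frac{w(U)}{\hat p_{M_n,n}(U)}\bigl(\hat v_{M_n,n}(U)-\beta^T I(U)\bigr)^2,
\end{align*}
with the convention $g(U)=0$ if $\hat p_{M_n,n}(U)=0$ (such a $U$ is never drawn). Let $A_n$ be the event that $\hat p_{M_n,n}(U)>0$ for every nonempty proper subset $U$; by Lemma \ref{lemma_proba} and the finiteness of the $2^p-2$ such subsets, $\P(A_n)\to 1$. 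On $A_n$, a direct computation yields
\begin{align*}
\E\bigl[\ell_{M_n,n}(\beta)\,\big|\,\bTheta_{M_n},\mathscr{D}_n\bigr] = \sum_{U} w(U)\bigl(\hat v_{M_n,n}(U)-\beta^T I(U)\bigr)^2.
\end{align*}

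Next I would handle the Monte-Carlo fluctuation by a conditional Chebyshev argument. The key quantitative input is that $\hat p_{M_n,n}(U)$, being the empirical frequency of $U$ among the at most $M_n\,t_n$ paths of the forest, satisfies $\hat p_{M_n,n}(U)\ge c/(M_n t_n)$ whenever it is positive. Together with the uniform boundedness of $\hat v_{M_n,n}(U)\in[0,1]$ and of $\beta^T I(U)$, this gives
\begin{align*}
\V\bigl[\ell_{M_n,n}(\beta)\,\big|\,\bTheta_{M_n},\mathscr{D}_n\bigr] \le \frac{C(\beta)}{K_n}\max_U\frac{1}{\hat p_{M_n,n}(U)} \le \frac{C'(\beta)\,M_n t_n}{K_n},
\end{align*}
on $A_n$. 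Under Assumptions (A2)--(A3), $t_n$ grows only logarithmically in $a_n$ while $n M_n/K_n\to 0$, hence $M_n t_n/K_n\to 0$, and conditional Chebyshev then yields $\ell_{M_n,n}(\beta)-\sum_U w(U)(\hat v_{M_n,n}(U)-\beta^T I(U))^2 \xrightarrow{p} 0$.

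Finally I would pass to the limit in the conditional mean. For each fixed $U$, Lemma \ref{lemma_proj} gives $\hat v_{M_n,n}(U)\xrightarrow{p} v^\star(U)$; since the outer sum over $U$ is finite and each summand is a continuous, uniformly bounded function of $\hat v_{M_n,n}(U)$, the continuous mapping theorem and a finite union yield
\begin{align*}
\sum_U w(U)\bigl(\hat v_{M_n,n}(U)-\beta^T I(U)\bigr)^2 \xrightarrow{p} \sum_U w(U)\bigl(v^\star(U)-\beta^T I(U)\bigr)^2 = \ell^\star(\beta),
\end{align*}
where the last equality uses that the weights $w(U)$ are normalized to the law of $Z$. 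Combining the two convergences via Slutsky, and noting that everything takes place on $A_n$ whose complement has vanishing probability, gives $\ell_{M_n,n}(\beta)\xrightarrow{p}\ell^\star(\beta)$. The main obstacle is the second step: the importance-sampling ratio $w(U)/\hat p_{M_n,n}(U)$ is in principle unbounded, so the argument hinges on the combinatorial lower bound $\hat p_{M_n,n}(U)\ge c/(M_n t_n)$ together with the precise growth rate imposed on $K_n$ by Assumption (A3).
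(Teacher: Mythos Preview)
Your approach parallels the paper's closely: both condition on $(\Dn,\bTheta_{M_n})$, invoke a combinatorial lower bound on $\hat p_{M_n,n}(U)$ to tame the importance-sampling ratio, use Lemma~\ref{lemma_proba} to ensure every subset is eventually hit, and finish with Lemma~\ref{lemma_proj} plus continuous mapping over the finitely many $U$. The paper organizes the argument slightly differently: it rewrites $\ell_{M,n}(\beta)=\sum_U w(U)\,\Delta_{n,K_n}(U)\,(\hat v_{M,n}(U)-\beta^TI(U))^2$ with $\Delta_{n,K_n}(U)=N_n(U)\mathds{1}_{\hat p_{M,n}(U)>0}/(K_n\hat p_{M,n}(U))$, proves $\Delta_{n,K_n}(U)\xrightarrow{p}1$ termwise via the law of total variance (using the bound $\hat p_{M,n}(U)\ge 1/(nM_n)$ when positive; your bound $c/(M_nt_n)$ is tighter and equally sufficient under (A3)), and then combines with Lemma~\ref{lemma_proj} by Slutsky.

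There is one genuine gap in your variance step. You assert ``uniform boundedness of $\hat v_{M_n,n}(U)\in[0,1]$'', but from the definition $\hat v_{M,n}(U)=1-\frac{1}{n\hat\sigma_Y}\sum_i(Y_i-m_{M,n}^{(U,OOB)}(\bX_i^{(U)},\bTheta_M))^2$ one only has $\hat v_{M,n}(U)\le 1$; nothing prevents it from being negative, and it is not deterministically bounded below. Hence your inequality $\V[\ell_{M_n,n}(\beta)\mid\bTheta_{M_n},\Dn]\le C'(\beta)M_nt_n/K_n$ with a \emph{deterministic} $C'(\beta)$ is unjustified---the hidden factor $(\hat v_{M_n,n}(U)-\beta^TI(U))^4$ is itself random. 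The paper's factorization sidesteps this neatly, because $\Delta_{n,K_n}(U)$ does not involve $\hat v_{M,n}(U)$ at all; its convergence to $1$ is established first, and Slutsky then handles the product without any a priori bound on $\hat v_{M,n}$. Your route is repairable---Lemma~\ref{lemma_proj} gives $(\hat v_{M_n,n}(U)-\beta^TI(U))^4=O_p(1)$, so the conditional variance is $o_p(1)$ rather than $o(1)$, and a slightly more careful conditional-Chebyshev/dominated-convergence argument then goes through---but as written the step is incomplete.
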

\begin{proof}[Proof of Theorem $1$]
    We assume that Assumptions (A1), (A2), and (A3) are satisfied.
    Since $\ell^{\star}$ is convex and $\beta$ belongs to the compact set $[0,1]^p$, the pointwise convergence of Lemma \ref{lemma_loss} gives the uniform convergence
    \begin{align*}
        \sup_{\beta \in [0,1]^p} |\ell_{M,n}(\beta) - \ell^{\star}(\beta)| \overset{p}{\longrightarrow} 0.
    \end{align*}
    Additionally, since $\ell^{\star}$ is a quadratic convex function and the constraint domain $[0,1]^p$ is convex, $\ell^{\star}$ has a unique minimum. According to Theorem $2$ from \citet{lundberg2017unified}, this unique minimum is $\mathbf{Sh}^{\star}$.
	Finally, since the minimum of $\ell^{\star}$ is unique and $\ell_{M,n}$ uniformly converges to $\ell^{\star}$, we apply Theorem $5.7$ from \citet[page 45]{van2000asymptotic} to conclude that
	\begin{align*}
	    \bSh_{M,n} \overset{p}{\longrightarrow} \mathbf{Sh}^{\star}.
	\end{align*}
\end{proof}

\section{Experiments} \label{sec_xp}

We run three batches of experiments to show the improvements of \textbf{SHAFF} over the main competitors \citet{broto2020variance}, \citet{williamson2020efficient}, and \citet[SAGE]{covert2020understanding}. Experiment $1$ is a simple linear case with a redundant variable, while Experiment $2$ is a non-linear example with high-order interactions. In both cases, existing Shapley algorithms exhibit a bias which significantly modifies the accurate variable ranking, as opposed to \textbf{SHAFF}. Next, we combine the new features of \textbf{SHAFF} with existing algorithms to break down the performance improvements due to the importance sampling and the projected forest.
Finally, Experiment $3$ shows the good empirical behavior of \textbf{SHAFF} for categorical variables, although we focus on continuous variables throughout the paper for the sake of clarity.

\paragraph{Experiment settings.}
Our implementation of \textbf{SHAFF} is based on \texttt{ranger}, a fast random forest software written in \texttt{C++} and \texttt{R} from \citet{wright2017ranger}, and is available at \url{https://gitlab.com/drti/shaff}.
We implemented \citet{williamson2020efficient} from scratch, as it only requires to sample variable subsets $U$, fit a random forest for each $U$, and recover Shapley effects by solving the linear regression problem defined in Section \ref{sec_algo}. Notice that we limit tree depth to $6$ when $|U| \leq 2$ to avoid overfitting---otherwise all default forest parameters are used.
We implemented SAGE following Algorithm $1$ from \citet{covert2020understanding}, and setting $m = 30$.
The original implementation of \citet{broto2020variance} in the R package \texttt{sensitivity} has an exponential complexity with $p$. Even for $p=10$, we could not have the experiments done within $24$ hours when parallelized on $16$ cores. Therefore, we do not display the results for \citet{broto2020variance}, which seem to have a high bias on toy examples.
In all procedures, the number $K$ of sampled subsets $U$ is set to $500$, and we use $500$ trees for the forest growing. Each run is repeated $30$ times to estimate the standard deviations. See Appendix \ref{appendix_A} for additional experiments supporting the choice of $K$.
For both experiments, we analytically derive the theoretical Shapley effects, and display this ground truth with red crosses in Figures \ref{fig_xp_1}---see Appendix \ref{appendix_D} for the formulas. Table \ref{table_xp_competitors} provides the sum of the absolute error of Shapley estimates for all variables with respect to the theoretical Shapley effects. This cumulative error is averaged over all repetitions to make standard deviations negligible.

\paragraph{Experiment 1: a linear case.}
In the first experiment, we consider a linear model and a correlated centered Gaussian input vector of dimension $11$. The output $Y$ follows
\begin{align*}
    Y = \beta^T\bX + \varepsilon,
\end{align*}
where $\beta \in [0,1]^{11}$, and the noise $\varepsilon$ is centered, independent, and such that $\V[\varepsilon] = 0.05 \times \V[Y]$. In Experiment 1a, two copies of $\smash{X^{(2)}}$ are appended to the data as $\smash{X^{(12)}}$ and $\smash{X^{(13)}}$, and two dummy Gaussian variables $\smash{X^{(14)}}$ and $\smash{X^{(15)}}$ are also added.  We draw a sample $\Dn$ of size $n = 3000$. 
\begin{figure}
	\begin{center}
		\includegraphics[height=6.3cm,width=9cm]{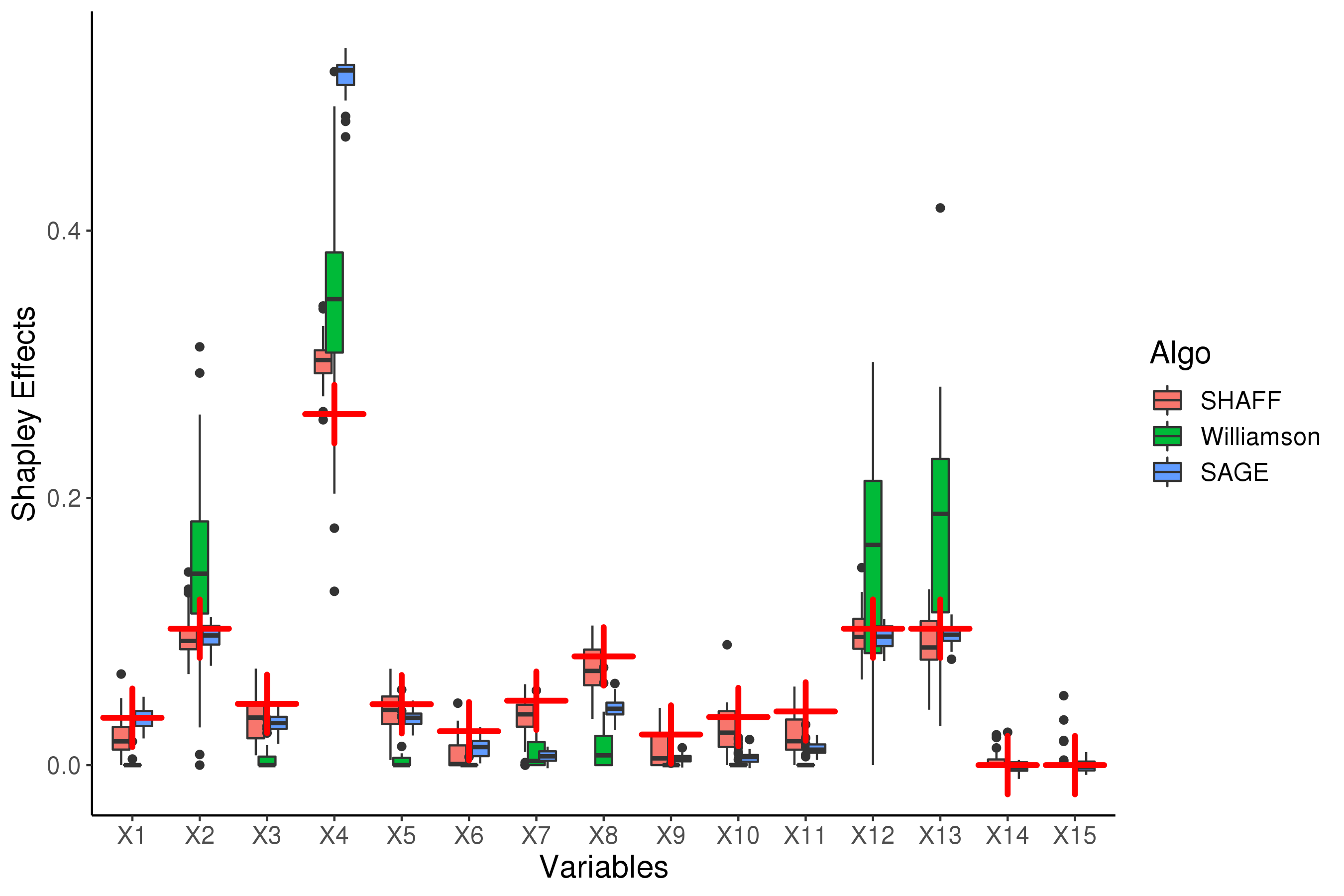}
		\caption{Shapley Effects for Experiment $1$a. (Red crosses are the theoretical Shapley effects.)}
		\label{fig_xp_1}
	\end{center}
\end{figure}
In this setting, Figure \ref{fig_xp_1} and Table \ref{table_xp_competitors} show that \textbf{SHAFF} is more accurate than its competitors. \citet[SAGE]{covert2020understanding} has a strong bias for several variables, in particular $\smash{X^{(4)}}$, $\smash{X^{(7)}}$, $\smash{X^{(8)}}$, and $\smash{X^{(10)}}$. The algorithm from \citet{williamson2020efficient} has a lower performance, and its variance is higher than for the other methods. Notice that \citet{williamson2020efficient} recommend to set $K = 2n$ ($ = 6000$ here), which is computationally more costly. Since we use $K = 500$ to compare all algorithms, this high variance is quite expected and show the improvement due to the importance sampling of our method. Besides, the computational complexity of \citet{williamson2020efficient} is $O(n^2)$ whereas \textbf{SHAFF} is quasi-linear. 
Finally, in this Experiment 1a, the random forest has a proportion of explained variance of about $86$\%, and the noise variance is $5$\%, which explains the small negative bias of many estimated values. 
Experiment 1b extends Experiment 1a to the case of higher dimension with $p = 100$, by adding $85$ noisy variables. The parameter \texttt{mtry} is set to $p$ to increase the forest accuracy in this sparse setting. Table \ref{table_xp_competitors} shows that \textbf{SHAFF} still outperforms its competitors in this context.
\begin{table}
    \setlength{\tabcolsep}{1.5pt}
	\centering
	\begin{tabular}{|c | c | c | c |}
		\hline \hline
        \small{Algorithm} & \small{Experiment} $1$a & \small{Experiment} $1$b & \small{Experiment} $2$ \\
		\hline \hline
         \small{\textbf{SHAFF}} & 0.25 & 0.80 & 0.15 \\
         \small{Williamson} & 0.64 & 1.17 & 0.24 \\
         \small{SAGE}  & 0.33 & 1.16 & 0.18 \\
		\hline \hline
	\end{tabular}
	\caption{Cumulative Absolute Error of SHAFF versus State-of-the-art Shapley Algorithms.} \label{table_xp_competitors}
\end{table}

\paragraph{Experiment 2: high-order interactions.}
In the second experiment, we consider two independent blocks of $5$ interacting variables. The input vector is Gaussian, centered, and of dimension $10$. All variables have unit variance, and all covariances are null, except
$\textrm{Cov}(X^{(1)}, X^{(2)}) = \textrm{Cov}(X^{(6)}, X^{(7)}) = 0.9$, and $\textrm{Cov}(X^{(4)}, X^{(5)}) = \textrm{Cov}(X^{(9)}, X^{(10)}) = 0.5$. 
The output $Y$ follows
\begin{align*}
 Y = & 3\sqrt{3} \times X^{(1)} X^{(2)} \mathds{1}_{X^{(3)} > 0} 
            + \sqrt{3} \times X^{(4)} X^{(5)} \mathds{1}_{X^{(3)} < 0} \\
    & + 3 \times X^{(6)} X^{(7)} \mathds{1}_{X^{(8)} > 0} + X^{(9)} X^{(10)} \mathds{1}_{X^{(8)} < 0} + \varepsilon,
\end{align*}
where the noise $\varepsilon$ is centered, independent, and such that $\V[\varepsilon] = 0.05 \times \V[Y]$. We add $5$ dummy Gaussian variables $\smash{X^{(11)}}$, $\smash{X^{(12)}}$, $\smash{X^{(13)}}$, $\smash{X^{(14)}}$, and $\smash{X^{(15)}}$, and draw a sample $\Dn$ of size $n = 10000$.
In this context of strong interactions and correlations, we observe in Table \ref{table_xp_competitors} that \textbf{SHAFF} outperforms its competitors. \textbf{SHAFF} is also the only algorithm providing the accurate variable ranking given by the theoretical Shapley effects. In particular, \textbf{SHAFF} properly identifies variable $\smash{X^{(3)}}$ as the most important one, whereas SAGE considerably overestimates the Shapley effects of variables $\smash{X^{(1)}}$ and $\smash{X^{(2)}}$---see Figure \ref{fig_xp_2} in Appendix \ref{appendix_A}.

\paragraph{SHAFF analysis.}
Table \ref{table_xp_split} displays the cumulative absolute error of Shapley algorithms, based on various combinations of variable subset sampling and conditional expectation estimates, for Experiments $1$ and $2$. The goal is to break down the improvement of \textbf{SHAFF} between the new features proposed in Section \ref{sec_algo}.
Firstly, we compare two approaches for the variable subset sampling: our paired importance sampling procedure (pIS) introduced in Subsection \ref{subsec_imp_samp}, and the paired Monte-Carlo sampling (pMC) approach of \citet{covert2020improving}.
Secondly, we compare several estimates of the conditional expectations: our projected random forest introduced in Subsection \ref{subsec_PRF}, the brute force retraining of a random forest for each subset $U$ (Forest) as in \citet{williamson2020efficient}, the marginal sampling (Marginals) used in \citet[SAGE]{covert2020understanding}, and the approach from \citet{lundberg2018consistent} specific to tree ensembles (TreeSHAP). In all cases, Shapley estimates are recovered using step $3$ defined in Subsection \ref{subsec_linreg}.
The comparisons of the first and last two lines of Table \ref{table_xp_split} clearly show the large improvement due to the importance sampling of \textbf{SHAFF}, since the cumulative error is divided by two compared to the paired Monte-Carlo sampling and using identical conditional expectation estimates. 
We also observe that the PRF algorithm is competitive with the brute force method of retraining many random forests, with a much smaller computational cost. 
Additionally, although the TreeSHAP algorithm \citep{lundberg2018consistent} is fast, it comes at the price of a much stronger bias than the other approaches.
Finally, the marginal sampling is as efficient as PRF for Experiment $1$ where the regression function is linear, but it is not the case for Experiment $2$ where variables have interactions.
\begin{table}
    \setlength{\tabcolsep}{2pt}
	\centering
	\begin{tabular}{|c | c | c |}
		\hline \hline
        Algorithm & Experiment $1$a & Experiment $2$ \\
		\hline \hline
         SHAFF & 0.25 & 0.15 \\
         pIS/Forest & 0.23 & 0.13 \\
         pIS/Marginals  & 0.26 & 0.31 \\
         pIS/TreeSHAP & 1.18 & 1.49 \\
         \hline \hline
         pMC/Projected-RF & 0.55 & 0.29 \\
         pMC/Forest & 0.56 & 0.19 \\
		\hline \hline
	\end{tabular}
	\caption{Cumulative Absolute Error of Shapley Estimates (based on various strategies for variable subset sampling and conditional expectation estimates). \label{table_xp_split}}
\end{table}

\paragraph{Experiment 3: categorical variables.}
\textbf{SHAFF} can be naturally extended to categorical variables, as it is the case for random forests. Originally, categorical variables are efficiently handled in trees by transforming them into ordered variables. Such ordering of categories is done with respect to the output mean for each category \citep{leo1984classification, friedman2001elements}, and we follow the software implementation of \texttt{ranger}.
In Experiment $3$, we consider an input vector $\bX$, made of five independent inputs, where $X^{(1)}$ and $X^{(2)}$ are two standard Gaussian variables, $X^{(3)}$ and $X^{(4)}$ are two uniform categorical variables taking $3$ possible values $a$, $b$, and $c$. We add two noisy variables $X^{(5)}$ and $X^{(6)}$, which are also uniform categorical but respectively take $10$ and $100$ different categories. Then, the regression function is given by
\begin{align*}
    m(\bX) = X^{(1)}\mathds{1}_{X^{(3)} = a} + X^{(2)}\mathds{1}_{X^{(3)} = b} + \varepsilon,
\end{align*}
where $\V[\varepsilon] = 0.05 \times \V[Y]$. Finally, a sample $\Dn$ of size $n = 2000$ is drawn to fit a random forest and Shapley algorithms. 
Figure \ref{fig_xp_cat2} and Figure \ref{fig_xp_cat1} in Appendix \ref{appendix_A} show that \textbf{SHAFF} performs well in this setting with categorical variables, as well as the brute force approach of \citet{williamson2020efficient}, as expected in this small dimensional setting. Interestingly, in the case of categorical variables, \citet{sutera2021global} and \citet{benard2021random} show that the MDI \citep{breiman2003atechnical} converges towards Shapley effects for totally randomized forests. In the case of Breiman's forests, where splits are optimized, such convergence result does not hold. In particular, $X^{(4)}$, $X^{(5)}$, and $X^{(6)}$ are not involved in the regression function and are independent of the other variables, leading to null Shapley effects. Although this is correctly estimated for $X^{(4)}$ by all algorithms, Figure \ref{fig_xp_cat2} exhibits a strong bias of the MDI for $X^{(5)}$ and $X^{(6)}$. This phenomenon is well known in the literature \citep{strobl2006bias}: the MDI is biased towards variables with a high number of categories, which is typically the case for $X^{(5)}$ and $X^{(6)}$. SAGE also suffers from this problem, whereas \textbf{SHAFF} accurately estimates a null Shapley effect for $X^{(5)}$ and $X^{(6)}$. Finally, SAGE and the MDI wrongly estimate that $X^{(3)}$ is significantly less important than $X^{(1)}$ and $X^{(2)}$.  Notice that \citet{amoukou2021accurate} provide a thorough discussion of Shapley effects for categorical variables.
\begin{figure}
	\begin{center}
		\includegraphics[height=6cm,width=8.5cm]{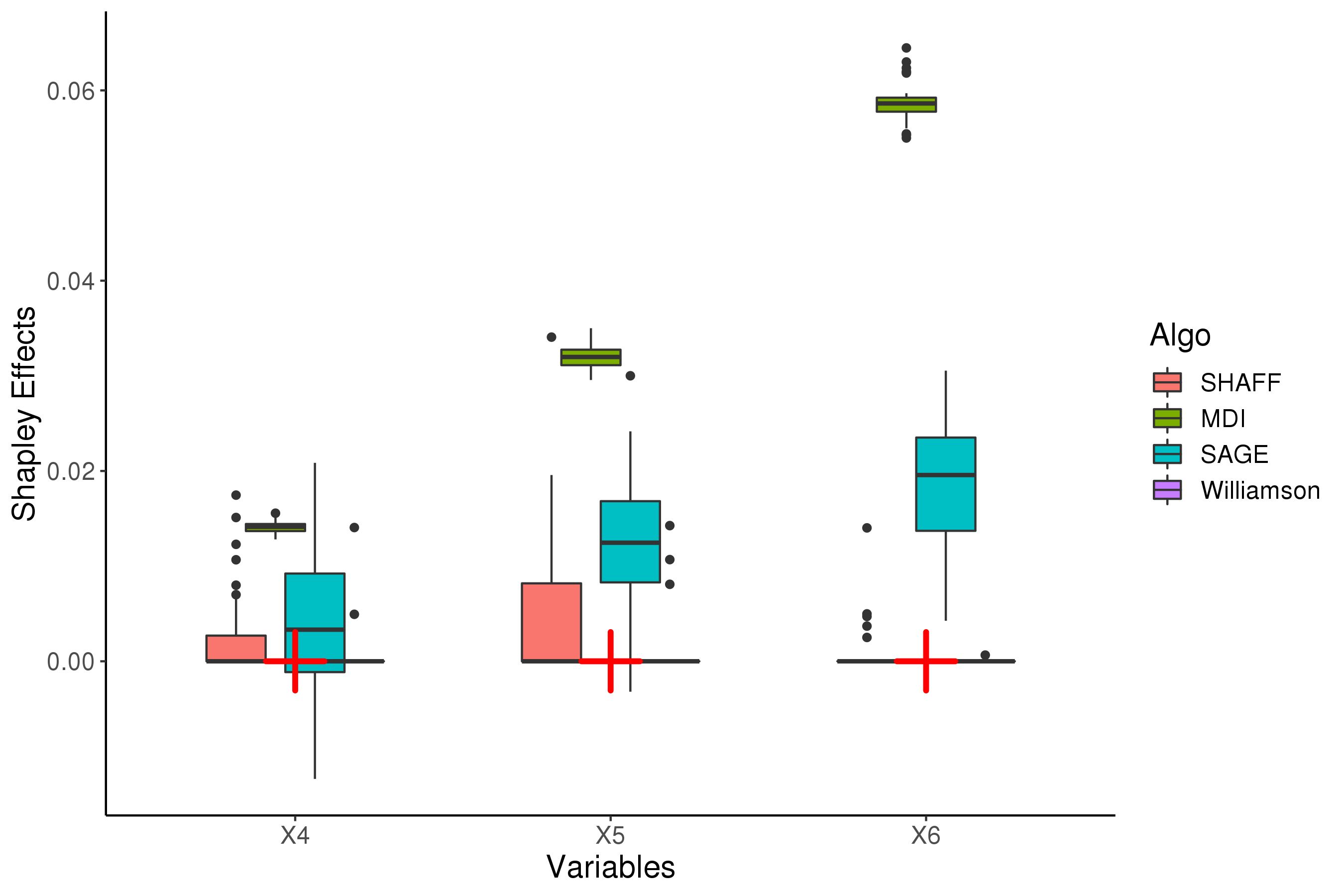}
		\caption{Shapley Effects for Experiment $3$ for $X^{(4)}$, $X^{(5)}$, and $X^{(6)}$. (Red crosses are the theoretical Shapley effects.)}
		\label{fig_xp_cat2}
	\end{center}
\end{figure}

\section{Conclusion}
We introduced \textbf{SHAFF}, \textbf{SHA}pley e\textbf{F}fects via random \textbf{F}orests, an algorithm to estimate Shapley effects based on random forests, which has an implementation in \texttt{R} and \texttt{C++}, available at \url{https://gitlab.com/drti/shaff}. The challenges in Shapley estimation are the exponential computational complexity, and the estimates of conditional expectations. \textbf{SHAFF} addresses the first point by using importance sampling to favor the subsets of influential variables, which often occur along the forest paths. For the second point, \textbf{SHAFF} uses the projected forest algorithm, a fast procedure to eliminate variables from the forest prediction mechanism. Thanks to this approach, \textbf{SHAFF} only needs to fit a random forest once, as opposed to other methods which retrain many models and are computationally costly. Importantly, we prove that \textbf{SHAFF} is consistent. To our knowledge, we propose the first Shapley algorithm which do not retrain several models and is proved to be consistent under mild assumptions. Furthermore, we conducted several experiments to show the practical performance improvements over state-of-the-art Shapley algorithms. Notice that the adaptation of \textbf{SHAFF} to SHAP values is straightforward, since the projected random forests provides predictions of the output conditional on any variable subset. In specific settings, it is obviously possible that other learning algorithms outperform random forests. Then, we can use such efficient model to generate a new large sample of simulated observations, which can then feeds \textbf{SHAFF} and improves its accuracy. Finally, the extension of the proposed algorithm to boosted tree ensembles seems a promising route for future research.

\subsubsection*{Acknowledgements}
We greatly thank the reviewers for their relevant suggestions to improve the article.

\bibliography{biblio}

\begin{thebibliography}{}

\bibitem[Aas et~al., 2021]{aas2019explaining}
Aas, K., Jullum, M., and L{\o}land, A. (2021).
\newblock {Explaining individual predictions when features are dependent: More
  accurate approximations to Shapley values}.
\newblock {\em Artificial Intelligence}, 298:103502.

\bibitem[Amoukou et~al., 2021]{amoukou2021accurate}
Amoukou, S.~I., Brunel, N.~J., and Sala{\"u}n, T. (2021).
\newblock Accurate and robust shapley values for explaining predictions and
  focusing on local important variables.
\newblock {\em arXiv preprint arXiv:2106.03820}.

\bibitem[Basu et~al., 2018]{basu2018iterative}
Basu, S., Kumbier, K., Brown, J., and Yu, B. (2018).
\newblock Iterative random forests to discover predictive and stable high-order
  interactions.
\newblock {\em Proceedings of the National Academy of Sciences},
  115:1943--1948.

\bibitem[B{\'e}nard, 2021]{benard2021random}
B{\'e}nard, C. (2021).
\newblock {\em Random forests and interpretability of learning algorithms}.
\newblock PhD thesis, Sorbonne Universit{\'e}.

\bibitem[B{\'e}nard et~al., 2021a]{benard2021sirus}
B{\'e}nard, C., Biau, G., Da~Veiga, S., and Scornet, E. (2021a).
\newblock Sirus: Stable and interpretable rule set for classification.
\newblock {\em Electronic Journal of Statistics}, 15:427--505.

\bibitem[B{\'e}nard et~al., 2021b]{benard2021mda}
B{\'e}nard, C., Da~Veiga, S., and Scornet, E. (2021b).
\newblock {MDA} for random forests: inconsistency, and a practical solution via
  the {S}obol-{MDA}.
\newblock {\em arXiv preprint arXiv:2102.13347}.

\bibitem[Breiman, 2003]{breiman2003atechnical}
Breiman, L. (2003).
\newblock Setting up, using, and understanding random forests v3.1.

\bibitem[Breiman et~al., 1984]{leo1984classification}
Breiman, L., Friedman, J., Olshen, R., and Stone, C. (1984).
\newblock {\em Classification and Regression Trees}.
\newblock Chapman \& Hall/CRC, Boca Raton.

\bibitem[Broto et~al., 2020]{broto2020variance}
Broto, B., Bachoc, F., and Depecker, M. (2020).
\newblock Variance reduction for estimation of shapley effects and adaptation
  to unknown input distribution.
\newblock {\em SIAM/ASA Journal on Uncertainty Quantification}, 8:693--716.

\bibitem[Covert and Lee, 2021]{covert2020improving}
Covert, I. and Lee, S.-I. (2021).
\newblock {Improving KernelSHAP: Practical Shapley Value Estimation Using
  Linear Regression}.
\newblock In {\em International Conference on Artificial Intelligence and
  Statistics}, pages 3457--3465. PMLR.

\bibitem[Covert et~al., 2020]{covert2020understanding}
Covert, I., Lundberg, S., and Lee, S.-I. (2020).
\newblock {Understanding Global Feature Contributions With Additive Importance
  Measures}.
\newblock In {\em Advances in Neural Information Processing Systems},
  volume~33, pages 17212--17223. Curran Associates, Inc.

\bibitem[Friedman et~al., 2001]{friedman2001elements}
Friedman, J., Hastie, T., and Tibshirani, R. (2001).
\newblock {\em The Elements of Statistical Learning}, volume~1.
\newblock Springer series in statistics New York.

\bibitem[Frye et~al., 2020]{NEURIPS2020_0d770c49}
Frye, C., Rowat, C., and Feige, I. (2020).
\newblock Asymmetric shapley values: incorporating causal knowledge into
  model-agnostic explainability.
\newblock In {\em Advances in Neural Information Processing Systems},
  volume~33, pages 1229--1239. Curran Associates, Inc.

\bibitem[Genuer et~al., 2010]{genuer2010variable}
Genuer, R., Poggi, J.-M., and Tuleau-Malot, C. (2010).
\newblock Variable selection using random forests.
\newblock {\em Pattern Recognition Letters}, 31:2225--2236.

\bibitem[Gy{\"o}rfi et~al., 2006]{gyorfi2006distribution}
Gy{\"o}rfi, L., Kohler, M., Krzyzak, A., and Walk, H. (2006).
\newblock {\em A distribution-free theory of nonparametric regression}.
\newblock Springer, New York.

\bibitem[Heskes et~al., 2020]{NEURIPS2020_32e54441}
Heskes, T., Sijben, E., Bucur, I., and Claassen, T. (2020).
\newblock Causal shapley values: Exploiting causal knowledge to explain
  individual predictions of complex models.
\newblock In {\em Advances in Neural Information Processing Systems},
  volume~33, pages 4778--4789. Curran Associates, Inc.

\bibitem[Iooss and Prieur, 2019]{iooss2017shapley}
Iooss, B. and Prieur, C. (2019).
\newblock {Shapley effects for sensitivity analysis with correlated inputs:
  comparisons with Sobol'indices, numerical estimation and applications}.
\newblock {\em International Journal for Uncertainty Quantification}, 9.

\bibitem[Janzing et~al., 2020]{janzing2020feature}
Janzing, D., Minorics, L., and Bl{\"o}baum, P. (2020).
\newblock Feature relevance quantification in explainable ai: A causal problem.
\newblock In {\em International Conference on Artificial Intelligence and
  Statistics}, pages 2907--2916. PMLR.

\bibitem[Louppe, 2014]{louppe2014understanding}
Louppe, G. (2014).
\newblock Understanding random forests: From theory to practice.
\newblock {\em arXiv preprint arXiv:1407.7502}.

\bibitem[Lundberg et~al., 2018]{lundberg2018consistent}
Lundberg, S., Erion, G., and Lee, S.-I. (2018).
\newblock Consistent individualized feature attribution for tree ensembles.
\newblock {\em arXiv preprint arXiv:1802.03888}.

\bibitem[Lundberg and Lee, 2017]{lundberg2017unified}
Lundberg, S. and Lee, S.-I. (2017).
\newblock A unified approach to interpreting model predictions.
\newblock In {\em Advances in Neural Information Processing Systems}, pages
  4765--4774, New York.

\bibitem[Meinshausen, 2006]{meinshausen2006quantile}
Meinshausen, N. (2006).
\newblock Quantile regression forests.
\newblock {\em Journal of Machine Learning Research}, 7:983--999.

\bibitem[Mentch and Hooker, 2016]{mentchquantifying2016}
Mentch, L. and Hooker, G. (2016).
\newblock Quantifying uncertainty in random forests via confidence intervals
  and hypothesis tests.
\newblock {\em Journal of Machine Learning Research}, 17:841--881.

\bibitem[Owen, 2014]{owen2014sobol}
Owen, A. (2014).
\newblock Sobol'indices and shapley value.
\newblock {\em SIAM/ASA Journal on Uncertainty Quantification}, 2:245--251.

\bibitem[Owen and Prieur, 2017]{owen2017shapley}
Owen, A. and Prieur, C. (2017).
\newblock On shapley value for measuring importance of dependent inputs.
\newblock {\em SIAM/ASA Journal on Uncertainty Quantification}, 5:986--1002.

\bibitem[Scornet et~al., 2015]{scornet2015consistency}
Scornet, E., Biau, G., and Vert, J.-P. (2015).
\newblock Consistency of random forests.
\newblock {\em The Annals of Statistics}, 43:1716--1741.

\bibitem[Shapley, 1953]{shapley1953value}
Shapley, L. (1953).
\newblock A value for n-person games.
\newblock {\em Contributions to the Theory of Games}, 2:307--317.

\bibitem[Song et~al., 2016]{song2016shapley}
Song, E., Nelson, B., and Staum, J. (2016).
\newblock Shapley effects for global sensitivity analysis: theory and
  computation.
\newblock {\em SIAM/ASA Journal on Uncertainty Quantification}, 4:1060--1083.

\bibitem[Strobl et~al., 2006]{strobl2006bias}
Strobl, C., Boulesteix, A.-L., Zeileis, A., and Hothorn, T. (2006).
\newblock Bias in random forest variable importance measures.
\newblock In {\em Workshop on Statistical Modelling of Complex Systems}.
  Citeseer.

\bibitem[{\v{S}}trumbelj and Kononenko, 2014]{vstrumbelj2014explaining}
{\v{S}}trumbelj, E. and Kononenko, I. (2014).
\newblock Explaining prediction models and individual predictions with feature
  contributions.
\newblock {\em Knowledge and information systems}, 41:647--665.

\bibitem[Sundararajan and Najmi, 2020]{sundararajan2020many}
Sundararajan, M. and Najmi, A. (2020).
\newblock The many shapley values for model explanation.
\newblock In {\em International Conference on Machine Learning}, pages
  9269--9278. PMLR.

\bibitem[Sutera et~al., 2021]{sutera2021global}
Sutera, A., Louppe, G., Huynh-Thu, V., Wehenkel, L., and Geurts, P. (2021).
\newblock From global to local mdi variable importances for random forests and
  when they are shapley values.
\newblock {\em Advances in Neural Information Processing Systems}, 34.

\bibitem[Van~der Vaart, 2000]{van2000asymptotic}
Van~der Vaart, A. (2000).
\newblock {\em Asymptotic statistics}, volume~3.
\newblock Cambridge university press.

\bibitem[Wager and Athey, 2018]{wager2018estimation}
Wager, S. and Athey, S. (2018).
\newblock Estimation and inference of heterogeneous treatment effects using
  random forests.
\newblock {\em Journal of the American Statistical Association},
  113:1228--1242.

\bibitem[Wang et~al., 2021]{pmlr-v130-wang21b}
Wang, J., Wiens, J., and Lundberg, S. (2021).
\newblock Shapley flow: A graph-based approach to interpreting model
  predictions.
\newblock In {\em Proceedings of The 24th International Conference on
  Artificial Intelligence and Statistics}, volume 130 of {\em Proceedings of
  Machine Learning Research}, pages 721--729. PMLR.

\bibitem[Williamson and Feng, 2020]{williamson2020efficient}
Williamson, B. and Feng, J. (2020).
\newblock Efficient nonparametric statistical inference on population feature
  importance using shapley values.
\newblock In {\em International Conference on Machine Learning}, pages
  10282--10291. PMLR.

\bibitem[Wright and Ziegler, 2017]{wright2017ranger}
Wright, M. and Ziegler, A. (2017).
\newblock ranger: A fast implementation of random forests for high dimensional
  data in {C}++ and {R}.
\newblock {\em Journal of Statistical Software}, 77:1--17.

\end{thebibliography}

%%%%%%%%%%%%%%%%%%%%%%%%%%%%%%%%%%%
%%%%%% SUPPLEMENT (OPTIONAL) %%%%%%
%%%%%%%%%%%%%%%%%%%%%%%%%%%%%%%%%%%

\clearpage
\appendix

\thispagestyle{empty}

% For one-column format, uncomment the following:
\onecolumn \makesupplementtitle
% For two-column format, uncomment the following:
%\twocolumn[ \makesupplementtitle ]

\FloatBarrier

\section{Additional Experiments} \label{appendix_A}

\subsection{Number of Variable Subsets $K$}
The recommended choice of $K = 500$, the number of variable subsets $U$ drawn in the first step of \textbf{SHAFF}, ensures that higher values have a small impact on \textbf{SHAFF} accuracy, while preserving a reasonable computational cost. 
For example, we sum the absolute error of \textbf{SHAFF} for all variables in Experiment 1 for increasing values of $K$, and provide the results in Table \ref{table_K} below (standard deviations are made negligible with repetitions). This shows the efficiency of the choice of $K = 500$.
\begin{table}
    \setlength{\tabcolsep}{2pt}
	\centering
	\begin{tabular}{|c | c |}
		\hline \hline
        $K$ & Cumulative Error \\
		\hline \hline
         10 & 0.67 \\
         50 & 0.40 \\
         100  & 0.30 \\
         200 & 0.29 \\
         500 & 0.25 \\
         1000 & 0.22 \\
         3000 & 0.21 \\
		\hline \hline
	\end{tabular}
	\caption{Cumulative Absolute Error of \textbf{SHAFF} in Experiment $1$ for Increasing Values of $K$. \label{table_K}}
\end{table}

\subsection{Experiment $2$}

In the context of strong interactions and correlations of Experiment $2$, we observe in Figure \ref{fig_xp_2} that all competitors have a strong bias for most variables, as opposed to \textbf{SHAFF}, which is also the only algorithm providing the accurate variable ranking given by the theoretical Shapley effects. In particular, \textbf{SHAFF} properly identifies variable $\smash{X^{(3)}}$ as the most important one, whereas SAGE considerably overestimates the Shapley effects of variables $\smash{X^{(1)}}$ and $\smash{X^{(2)}}$. 
\textbf{SHAFF} also ranks variable $\smash{X^{(8)}}$ as more important than $\smash{X^{(6)}}$ and $\smash{X^{(7)}}$, as opposed to its competitors. Besides, the proportion of explained variance of the forest is about $84$\% in this setting, which explains the negative bias observed for several estimates.
\begin{figure}
	\begin{center}
		\includegraphics[height=10cm,width=15cm]{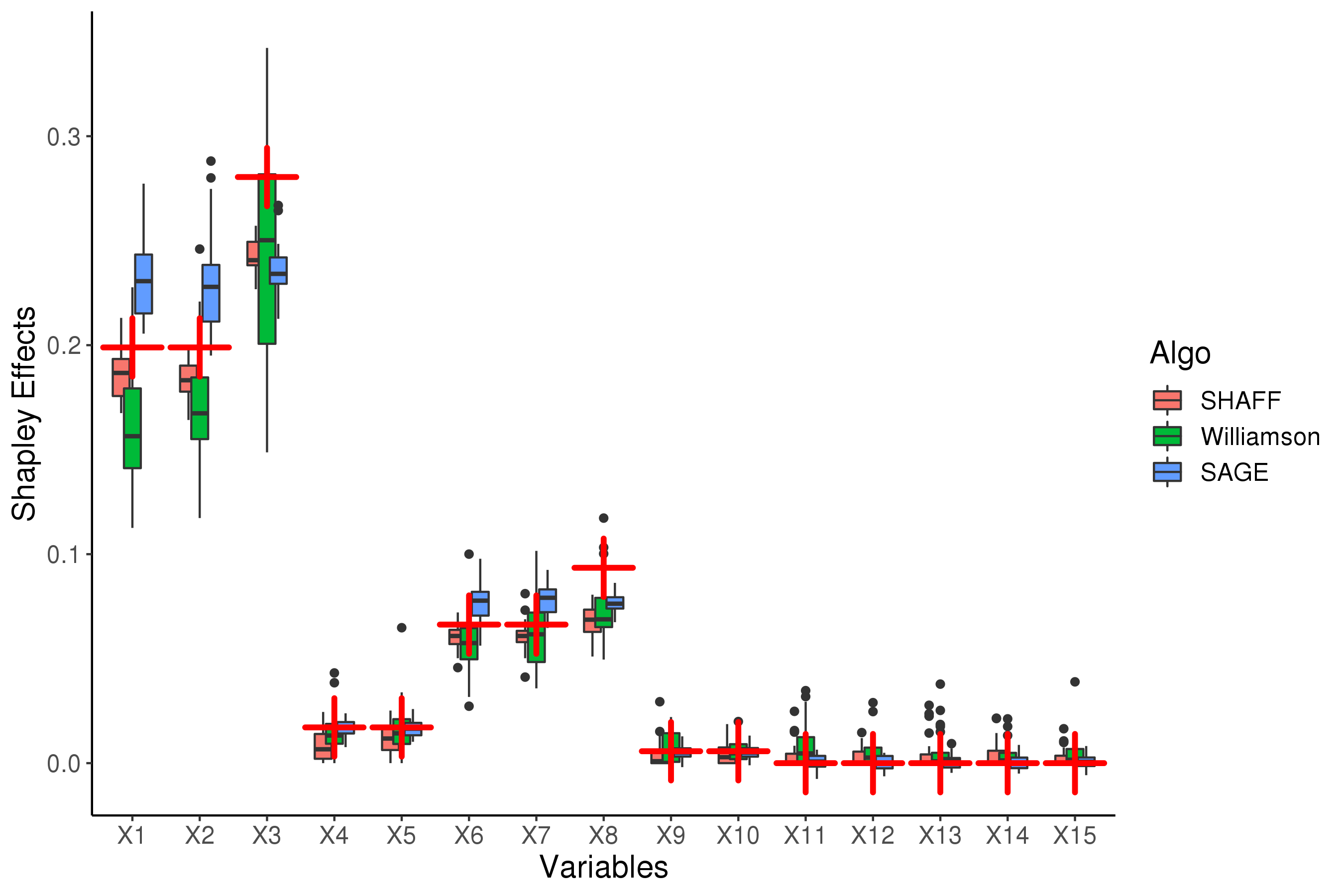}
		\caption{Shapley Effects for Experiment $2$. (Red crosses are the theoretical Shapley effects.)}
		\label{fig_xp_2}
	\end{center}
\end{figure}

\subsection{Experiment $3$}

As mentioned in the main article, SAGE and the MDI both wrongly estimate that $X^{(3)}$ is significantly less important than $X^{(1)}$ and $X^{(2)}$, as shown in Figure \ref{fig_xp_cat1}. On the other hand, \textbf{SHAFF} and SAGE are both efficient estimates of Shapley effects in this experiment.
\begin{figure}
	\begin{center}
		\includegraphics[height=8cm,width=12cm]{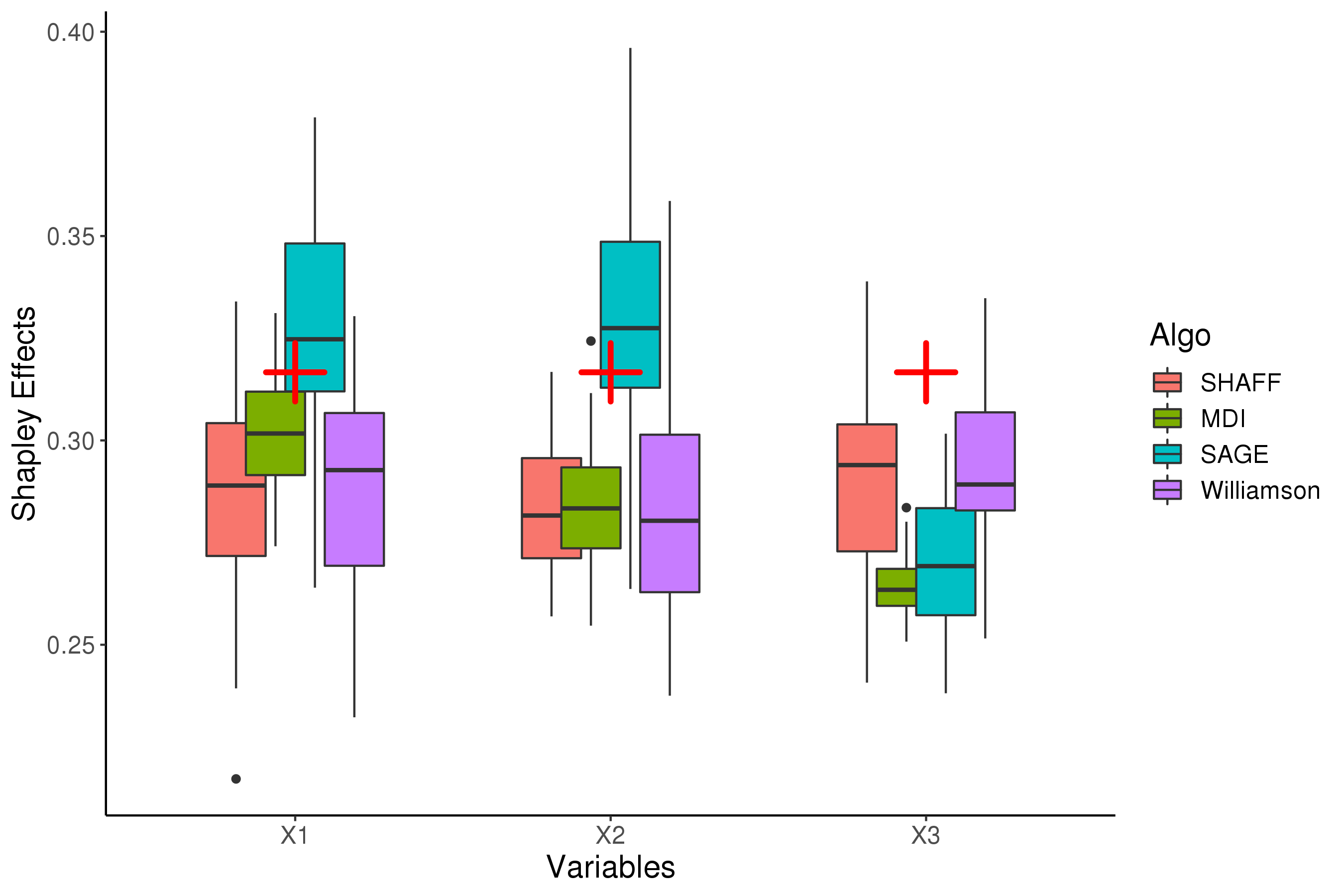}
		\caption{Shapley Effects for Experiment $3$ for $X^{(1)}$, $X^{(2)}$, and $X^{(3)}$. (Red crosses are the theoretical Shapley effects.)}
		\label{fig_xp_cat1}
	\end{center}
\end{figure}

\section{Computational Complexity} \label{appendix_B}

We provide the average computational complexity of \textbf{SHAFF}, as well as its competitors \citet{broto2020variance}, \citet{williamson2020efficient}, and \citet[SAGE]{covert2020understanding}. For these last two algorithms, random forests are used as the required black-box model. Only \textbf{SHAFF} is quasi-linear with the sample size $n$ and independent of the dimension $p$.

\subsection{SHAFF}

We derive the computational complexity of each step of \textbf{SHAFF}. Overall, the computational complexity is $O(MKn\log(n))$.

\paragraph{Importance sampling.}
In order to compute the variable subset importance, \textbf{SHAFF} counts the occurence of variable subsets $U$ in the tree paths of the forest, which has a complexity of $O(Mn)$, since each tree has about $O(n)$ nodes.
The sampling of $K$ subsets $U$ has a complexity of $O(K)$.

\paragraph{Projected random forests.}
An efficient implementation of the PRF algorithm is detailed in Algorithm \ref{algo_PRF}. For the sake of clarity, we provide a version of PRF for a single variable subset $U$ and one query point $\smash{\bX^{(U)}}$. 
Let us consider a given tree. The new observation $\smash{\bX^{(U)}}$ is dropped down the tree, eventually applying multiple splits at each level, because data points are sent on both sides of splits involving a variable outside of $U$. At the same time, the PRF computes which training observations fall in the same projected cell as $\smash{\bX^{(U)}}$, and stops going down the tree just before the size of this projected cell becomes lower than the parameter \texttt{min\_node\_size}. Such procedure has a complexity of $O(n)$ since we sequentially apply splits to reduce the number of training observations from about $n$ to \texttt{min\_node\_size} to reach the terminal projected cell. Therefore, the computational complexity to compute the PRF prediction for a given $U$ and $\smash{\bX^{(U)}}$ is $O(Mn)$. 

In \textbf{SHAFF}, the PRF is run for all subsets $U \in \bU$ and the full OOB sample for each tree.
In practice, we do not naively run Algorithm \ref{algo_PRF} for all $U$ and OOB observations, i.e., $O(Kn)$ times, since it would lead to a quadratic complexity with $n$. Instead, for a given tree, all OOB and training observations are dropped down the tree simultaneously. Even if multiple splits are applied at each tree level, we are still partitioning two samples of size $O(n)$ by sequentially applying splits: splitting one time all cells of a given partition takes $O(n)$ operations, and this has to be repeated $O(\log(n))$ times so that each cell reaches a size of \texttt{min\_node\_size}. Therefore, the global complexity of running PRF for the full OOB samples and the $K$ subsets $U$ is $O(MKn\log(n))$. 

\paragraph{Shapley effect estimates.}
The complexity to solve a least square problem with $p$ columns and $K$ rows is $O(p^3K)$. However in practice, $K$ is always fixed to default value, and when $p > K$, only at most $O(K)$ input variables are selected in the subsets $U$. For the non-selected inputs, the Shapley effect is null, and they can be removed from the least square problem, leading to a complexity of $O(K^4)$.

\subsection{Competitors}

\paragraph{\citet{broto2020variance}}
The conditional expectations are estimated for all $U \in \{1,\hdots,p\}$, which makes $2^p$ estimates. Efficient $k$-nearest neighbor algorithms have a complexity of $O(pn\log(n))$. Overall the complexity is $O(n\log(n)p2^p)$, which is exponential with respect to the dimension $p$.

\paragraph{\citet{williamson2020efficient}}
Growing $K$ random forests from scratch, one for each subset $U$, has an averaged complexity of $O(\smash{MKpn\log^2(n))}$ \citep{louppe2014understanding}. \citet{williamson2020efficient} recommend to use $K = O(n)$, which makes a global complexity of $O(\smash{Mpn^2\log^2(n))}$, and is quadratic with respect to the sample size $n$ and depends on the dimension $p$.

\paragraph{\citet[SAGE]{covert2020understanding}}
Running a prediction for random forests takes $O(M\log(n))$ operations. Since SAGE computes $np$ predictions, the global complexity is $O(Mpn\log(n))$ and depends on the dimension $p$.

\renewcommand{\labelitemi}{\textendash}
\begin{algorithm}
\caption{Projected Random Forest}
\label{algo_PRF}
\begin{algorithmic}[1]
\STATE \textbf{Inputs:} A random forest fit with $\Dn$, a variable subset $U \subset \{1,\hdots,p\}$, and a query point $\bX^{(U)}$. \\[0.5em]

\STATE for all trees in the forest: \\[0.3em]

\INDSTATE[1] \# Step 1: initialize variables
\INDSTATE[1] initialize $nodes\_level$ as a list of nodes containing only the root node;
\INDSTATE[1] initialize $nodes\_child$ as an empty list of child nodes;
\INDSTATE[1] initialize $samples$ as the list of observation indices of the full training data of the tree; \\[0.3em]

\INDSTATE[1] for all levels in the tree: \\[0.3em]

\INDSTATE[2] \# Step 2: drop $\bX^{(U)}$ to the next tree level with the relevant training observations
\INDSTATE[2] for all nodes in $nodes\_level$:
\INDSTATE[3] if the node splits on a variable in $U$: 
\INDSTATE[4] compute whether $\smash{\bX^{(U)}}$ falls in the left or right child node;
\INDSTATE[4] append the child node to $nodes\_child$;
\INDSTATE[4] set $samples\_child$ as the observations in $samples$ which satisfy the split \\ 
\INDSTATE[3] else:
\INDSTATE[4] append both the left and right children nodes to $nodes\_child$;
\INDSTATE[4] set $samples\_child = samples$;
\INDSTATE[3] if the size of $samples\_child$ is lower then $min\_node\_size$:
\INDSTATE[4] break the loop through the tree levels;
\INDSTATE[3] else:
\INDSTATE[4] set $samples = samples\_child$;
\INDSTATE[2] set $nodes\_level = nodes\_child$; \\[0.3em]

\INDSTATE[2] \# Step 3: compute prediction
\INDSTATE[2] compute the tree prediction as the average of $Y_i$ for all $i$ in $samples$; \\[0.3em]

\STATE average predictions of all trees; \\[0.3em]

\STATE return final prediction;
\end{algorithmic}
\end{algorithm}

\section{Proof of Lemmas \ref{lemma_proba}, \ref{lemma_proj}, and \ref{lemma_loss}} \label{appendix_C}

For the sake of clarity, we first recall Lemmas \ref{lemma_proba}, \ref{lemma_proj}, and \ref{lemma_loss}, and then provide their proofs.
\setcounter{lemme}{0}

\begin{lemme}
    If Assumptions (A2) and (A3) are satisfied, for all $U \subset \{1, \hdots, p\}$, we have
    \begin{align*}
        \P\big( \hat{p}_{M_n,n}(U) > 0 \big) \longrightarrow 1.    
    \end{align*}
\end{lemme}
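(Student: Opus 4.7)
The plan is to exploit the two independent sources of randomness built into the modified forest: the $\texttt{mtry}=1$ event, which at each node independently occurs with probability $\delta$, and the independence of the $M_n$ trees conditional on the data $\Dn$. First I would show that any fixed nonempty subset $U$ appears along some path of a single tree with probability bounded below by a positive constant $q$, independent of $n$; then I would bootstrap this via $M_n \to \infty$ to conclude.

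Fix $U = \{i_1,\ldots,i_k\}$ together with any ordering of its elements. I observe that at any internal node of any tree, conditionally on the node being split, the probability that the split is on a prescribed variable $i_j$ is at least $\delta/p$: with probability $\delta$ we have $\texttt{mtry}=1$, and then $i_j$ is picked uniformly among the $p$ candidates with probability $1/p$. Crucially, these random choices are independent across nodes. Focusing on the leftmost path of the tree, if this path reaches depth $k$, then the probability that its first $k$ splits use $(i_1,\ldots,i_k)$ in that prescribed order is at least $(\delta/p)^k$; and in this event $U$ is exactly the set of variables labelling the first $k$ splits of this path, so by definition $U$ appears.

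Next I would argue that the leftmost path does reach depth $k$ with probability tending to one. The $\gamma$-constraint of Assumption (A2) forces every cell at depth $d$ to contain at least $\gamma^d a_n$ observations, and $a_n \to \infty$ then gives $\gamma^k a_n \to \infty$, so for $n$ large every cell along the leftmost path of depth at most $k$ comfortably exceeds $\texttt{min\_node\_size}$ and is therefore eligible for splitting; combined with $t_n \to \infty$, which forces the tree-growing process to keep expanding until many leaves are reached, this ensures the leftmost path is actually grown to depth $\geq k$. Putting the last two paragraphs together, for $n$ large enough there exists $q > 0$, depending only on $\delta$, $p$, and $k$, with
\[
    \P\bigl(U \text{ appears in the paths of a given tree}\bigr) \;\geq\; q.
\]

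Finally, because the $M_n$ trees are built from independent copies of $\Theta_\ell$ conditional on $\Dn$, the events ``$U$ appears in tree $\ell$'' are conditionally independent across $\ell$, so
\[
    \P\bigl(\hat{p}_{M_n,n}(U) = 0 \,\big|\, \Dn\bigr) \;\leq\; (1-q)^{M_n}.
\]
Taking expectations and invoking $M_n \to \infty$ from (A3) yields $\P(\hat{p}_{M_n,n}(U) = 0) \to 0$, which is exactly the claim. The main obstacle in this plan will be the rigorous verification of the third paragraph, namely that the leftmost path reaches depth $k$ with high probability for large $n$: this depends on the precise node-selection rule of the tree-growing algorithm, but the combination of $\gamma > 0$, $a_n \to \infty$, and $t_n \to \infty$ should make the argument go through without difficulty.
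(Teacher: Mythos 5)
Your proposal is correct and follows essentially the same route as the paper's proof: lower-bound the per-tree probability that $U$ occurs along a path by $(\delta/p)^{|U|}$ using the forced $\texttt{mtry}=1$ events, check that some path is grown to depth $|U|$ for $n$ large (the paper argues this via $t_n \to \infty$ and the node-size constraint, much as you do), and then exploit the conditional independence of the $M_n$ trees together with $M_n \to \infty$ to drive $\P(\hat{p}_{M_n,n}(U)=0 \mid \Dn) \leq (1-(\delta/p)^{|U|})^{M_n}$ to zero. The only cosmetic difference is that the paper uses the cruder uniform bound $(\delta/p)^{p}$ rather than your $(\delta/p)^{|U|}$.
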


\begin{lemme}
    If Assumptions (A1) and (A2) are satisfied, the PRF is consistent, that is, for all $M \in \mathbb{N}^{\star}$ and $U \subset \{1,\hdots,p\}$, 
    \begin{align*}
        \hat{v}_{M,n}(U) \overset{p}{\longrightarrow} \V[\E[Y|\bX^{(U)}]]/\V[Y] \overset{\rm def}{=} v^{\star}(U).
    \end{align*}
\end{lemme}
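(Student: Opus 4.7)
The plan is to reduce the statement to an $L^2$-consistency result for the projected tree estimator, and then invoke a partitioning-regression argument à la Györfi et al.\ to close it out. I start from the identity
\begin{align*}
v^{\star}(U) \;=\; 1 - \frac{\E[(Y - m^{\star}(\bX^{(U)}))^2]}{\V[Y]},
\end{align*}
where $m^{\star}(\bx^{(U)}) := \E[Y\mid \bX^{(U)} = \bx^{(U)}]$; this is just the law of total variance. Since $\hat{\sigma}_Y \to \V[Y]$ almost surely by the SLLN (the sub-Gaussianity in (A1) gives finite variance of $Y$), Slutsky lets me replace the denominator and reduces the goal to showing
\begin{align*}
\frac{1}{n}\sum_{i=1}^{n}\bigl(Y_i - m_{M,n}^{(U,\mathrm{OOB})}(\bX_i^{(U)},\bTheta_M)\bigr)^2 \;\overset{p}{\longrightarrow}\; \E[(Y - m^{\star}(\bX^{(U)}))^2].
\end{align*}

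Second, I expand the squared residual as
\begin{align*}
(Y_i - \hat{m}_i)^2 \;=\; (Y_i - m^{\star}(\bX_i^{(U)}))^2 \;+\; 2(Y_i - m^{\star}(\bX_i^{(U)}))(m^{\star}(\bX_i^{(U)}) - \hat{m}_i) \;+\; (m^{\star}(\bX_i^{(U)}) - \hat{m}_i)^2,
\end{align*}
writing $\hat{m}_i$ for the OOB PRF prediction. The first term averages to the target by the SLLN. Via Cauchy--Schwarz, the cross term is controlled once the third term is small on average. Hence everything comes down to proving
\begin{align*}
\frac{1}{n}\sum_{i=1}^n (m^{\star}(\bX_i^{(U)}) - \hat{m}_i)^2 \;\overset{p}{\longrightarrow}\; 0,
\end{align*}
which by Markov's inequality and the exchangeability of the $(\bX_i, Y_i)$ reduces to $\E[(m_{M,n}^{(U,\mathrm{OOB})}(\bX^{(U)},\bTheta_M) - m^{\star}(\bX^{(U)}))^2] \to 0$. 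Using that averaging over $M$ trees only shrinks the $L^2$ error, it suffices to bound this for a single projected tree, evaluated on a query point independent of the subsample used to fit the tree (this independence is a key benefit of using the OOB sample).

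Third, I address the $L^2$-consistency of the single projected tree estimator by a standard bias--variance decomposition for a partitioning regression estimator (Györfi, Kohler, Krzyżak, and Walk, 2006, Ch.~6). The projected partition $\pi_n^U$ on $\R^{|U|}$ is the common refinement of the original tree partition over the non-$U$ coordinates; its cells are precisely the sets over which the PRF prediction is constant. For the bias, the $\delta$-randomization of \texttt{mtry} in (A3)-preceding-assumptions ensures that every coordinate in $U$ is selected as a split variable with positive probability at every node, and the $\gamma$-balancedness constraint forces geometric shrinkage of cell diameters along every such coordinate as $t_n\to\infty$. Combined with the continuity of $m$ and the density bounds in (A1), this drives the bias of the projected cell mean to $0$ in $L^2$. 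For the variance, the projected partition has at most $2^{t_n}$ cells produced from a tree built on $a_n$ points, and a uniform cell-count / cell-occupancy bound yields a variance of order $2^{t_n}\,\mathrm{polylog}(a_n)/a_n$; the regime in (A2) was calibrated precisely so that this vanishes.

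The main obstacle I expect is the variance control in the previous step: a naive Györfi bound would give $t_n/a_n$ for a partition with $t_n$ cells, but the projected partition can have up to $2^{t_n}$ cells because splits on non-$U$ variables multiply the number of pieces after projection. The delicate point is that the partition is \emph{data-dependent}, so a simple union bound over cells is not enough; one must either decouple via the independence of OOB queries from the subsample, or pay a logarithmic VC-type price, which is why Assumption (A2) carries an explicit $2^{t_n}(\log a_n)^9/a_n \to 0$ rather than a mere $t_n/a_n \to 0$. Once this $L^2$-consistency is in hand, the earlier reduction yields $\hat{v}_{M,n}(U) \overset{p}{\to} v^{\star}(U)$ for every fixed $M$.
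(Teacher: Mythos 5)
Your proposal is correct and follows essentially the same route as the paper: the same expansion of the residual around $\E[Y\mid\bX^{(U)}]$ reducing everything to $\mathbb{L}^2$-consistency of the OOB projected-forest estimate, followed by a Gy\"orfi-style approximation/estimation split in which the $\gamma$- and $\delta$-modifications drive the cell diameters (hence the bias) to zero and Assumption (A2) kills the estimation error. The only cosmetic differences are that the paper handles the OOB-to-standard reduction by citing a lemma of B\'enard et al.\ (2021) and obtains the approximation error by noting that the projected cell is contained in the original cell $A_n(\bX,\Theta)$, whose diameter vanishes by Meinshausen's Lemma 2, rather than arguing coordinate-wise shrinkage directly.
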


We let $Z$ be a discrete random variable taking values in the set of all subsets of $\{1, \hdots, p\}$, excluding the full and empty sets. The discrete distribution of $Z$ is given by the weights $w(U)$ (the weights are scaled to sum to 1).
\begin{lemme}
    If Assumptions (A1), (A2), and (A3) are satisfied, we have
    \begin{align*}
        \ell_{M,n}(\beta) \overset{p}{\longrightarrow} \E[(v^{\star}(Z) - \beta^T I(Z))^2] \overset{\rm def}{=} \ell^{\star}(\beta).
    \end{align*}
\end{lemme}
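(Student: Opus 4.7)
The plan is to decompose the deviation $\ell_{M,n}(\beta) - \ell^{\star}(\beta)$ into two pieces that are handled separately: an \emph{importance-sampling error} that shrinks by a law of large numbers on the $K_n$ drawn subsets, and a \emph{projection error} controlled by Lemma \ref{lemma_proj}. Concretely, introduce the intermediate quantity
\begin{align*}
A_n(\beta) = \sum_{U \subset \{1,\hdots,p\}} w(U) \big(\hat{v}_{M_n,n}(U) - \beta^T I(U)\big)^2 \mathds{1}_{\hat{p}_{M_n,n}(U) > 0},
\end{align*}
and write $\ell_{M,n}(\beta) - \ell^{\star}(\beta) = [\ell_{M,n}(\beta) - A_n(\beta)] + [A_n(\beta) - \ell^{\star}(\beta)]$. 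Lemma \ref{lemma_proba} combined with a union bound over the $2^p - 2$ non-trivial subsets ensures that the event $E_n = \{\hat{p}_{M_n,n}(U) > 0 \text{ for all } U\}$ has probability tending to $1$, so the indicators can be replaced by $1$ in the limit.

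First I would handle the importance-sampling error by conditioning on $\bTheta_{M_n}$ and $\Dn$. Given this conditioning, the subsets $U_1, \ldots, U_{K_n}$ are i.i.d.\ draws from the discrete distribution $\hat{p}_{M_n,n}$, and the summands
\begin{align*}
\xi_k = \frac{w(U_k)}{\hat{p}_{M_n,n}(U_k)} \big(\hat{v}_{M_n,n}(U_k) - \beta^T I(U_k)\big)^2
\end{align*}
are conditionally i.i.d.\ with mean exactly $A_n(\beta)$ by the standard importance-sampling identity $\sum_U \hat{p}_{M_n,n}(U) \cdot w(U)/\hat{p}_{M_n,n}(U) \cdot f(U) = \sum_{U : \hat p_{M_n,n}(U)>0} w(U) f(U)$. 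A Chebyshev bound then gives $\P(|\ell_{M,n}(\beta) - A_n(\beta)| > \varepsilon \mid \bTheta_{M_n}, \Dn) \leq \V[\xi_1 \mid \bTheta_{M_n}, \Dn]/(K_n \varepsilon^2)$, and the conditional variance is bounded by $C/\min_U \hat{p}_{M_n,n}(U)$ on $E_n$ since $w(U)$, $\hat v_{M_n,n}(U)$ and $\beta^T I(U)$ are uniformly bounded.

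Next I would argue that on $E_n$, $\min_U \hat p_{M_n,n}(U)$ is at least the reciprocal of the total subset count in the forest, which is at most $M_n t_n^2$; together with assumption (A2) giving $t_n = O(\log n)$ and (A3) giving $nM_n/K_n \to 0$, this yields $M_n t_n^2 / K_n \to 0$, so the conditional Chebyshev bound converges to $0$. Taking expectations transfers the convergence to unconditional convergence in probability, i.e.\ $\ell_{M,n}(\beta) - A_n(\beta) \overset{p}{\to} 0$. For the second piece, Lemma \ref{lemma_proj} gives $\hat{v}_{M_n,n}(U) \overset{p}{\to} v^{\star}(U)$ for each fixed $U$; since the sum defining $A_n$ runs over the finite set of subsets, the continuous mapping theorem yields $A_n(\beta) \mathds{1}_{E_n} \overset{p}{\to} \sum_U w(U)(v^{\star}(U) - \beta^T I(U))^2 = \ell^{\star}(\beta)$, and Lemma \ref{lemma_proba} removes the indicator in the limit.

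The main obstacle will be the control of the importance weights $w(U)/\hat{p}_{M_n,n}(U)$, because $\hat{p}_{M_n,n}(U)$ is itself random and can be arbitrarily small. This is why the two growth conditions in (A3), $M_n \to \infty$ and $nM_n/K_n \to 0$, must be combined with the leaf-count bound $t_n = O(\log n)$ from (A2): together they dominate the worst-case blow-up of $1/\min_U \hat{p}_{M_n,n}(U)$ by the sample size $K_n$, making the conditional variance go to zero. Once this quantitative bound is established, the combination with Lemmas \ref{lemma_proba} and \ref{lemma_proj} is routine.
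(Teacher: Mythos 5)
Your proposal is correct and follows essentially the same route as the paper's proof: both condition on the data and the forest, control the Monte-Carlo error of the importance sampling through a conditional variance (Chebyshev) bound that exploits the fact that any nonzero $\hat{p}_{M_n,n}(U)$ is at least the reciprocal of the number of subset occurrences in the forest together with Assumption (A3), and then invoke Lemmas \ref{lemma_proba} and \ref{lemma_proj} to pass to the limit. The only cosmetic differences are that the paper isolates the ratio $N_n(U)/(K_n\hat{p}_{M_n,n}(U))$ term by term and shows it tends to $1$ via the law of total variance, whereas you apply Chebyshev to the aggregated sum, and the paper uses the cruder bound $nM_n$ on the number of forest paths where you use $M_n t_n^2$; both suffice under (A2)--(A3).
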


\begin{proof}[Proof of Lemma \ref{lemma_proba}]
    We assume that Assumptions (A2) and (A3) are satisfied, and denote by $T_{n,\ell}$ the random set of all variable subsets of $\{1,\hdots,p\}$ belonging to a path of the $\ell$-th tree. To prove the result, we derive an upper bound for $\P(\hat{p}_{M,n}(U) = 0)$. First, we write
    \begin{align*}
        \P(\hat{p}_{M,n}(U) = 0 | \Dn) = \P\big(\bigcap_{\ell=1}^{M_n} U \notin T_{n,\ell} | \Dn \big),
    \end{align*}
    and since the trees are independent conditional on $\Dn$
        \begin{align*}
        \P(\hat{p}_{M,n}(U) = 0 | \Dn) = \P(U \notin T_{n,1} | \Dn)^{M_n}.
    \end{align*}
    For $n$ large enough, there is at least one path in each tree that has at least $p$ splits. Indeed, two cases are possible to get a tree of minimum depth $p$: $n > s 2^{p-1}$, where $s$ is the minimum number of observations in a terminal leaf, or, if the maximal number of terminal leaves is reached, $t_n > 2^p$. Both are satisfied for $n$ large enough since $t_n$ is not bounded by Assumption (A2).
    Additionally, recall that the random forest algorithm is slightly modified such that \texttt{mtry} is randomly set to $1$ with a small probability $\delta$.
    Thus, if we define the random event $A_n$ as \texttt{mtry} is set to $1$ and a new variable of $U$ is selected at each node of a path of length at least $|U|$, then $A_n$ is included in $\{U \in T_{n,1}\}$. This event $A_n$ is of probability lower bounded by $(\delta/p)^p$, and thus for $n$ large enough, we have
    \begin{align*}
        \P(U \in T_{n,1} | \Dn) \geq P(A_n) \geq (\delta/p)^p,
    \end{align*}
    and then
    \begin{align*}
        \P(\hat{p}_{M,n}(U) = 0 | \Dn) \leq (1 - (\delta/p)^{p})^{M_n}.
    \end{align*}
    Finally, Assumption (A3) gives that the number of trees increases with $n$, and we obtain
    \begin{align*}
        \P\big( \hat{p}_{M,n}(U) = 0 \big) \longrightarrow 0,
    \end{align*}
    which is the desired result.
\end{proof}

\begin{proof}[Proof of Lemma \ref{lemma_proj}]
    We assume that Assumptions (A1) and (A2) are satisfied and consider $M \in \mathbb{N}^{\star}$ and $U \subset \{1,\hdots,p\}$. Recall that
    \begin{align*}
        \hat{v}_{M,n}(U) = 1 - \frac{1}{n \hat{\sigma}_Y} \sum_{i=1}^{n} \big(Y_i - m_{M,n}^{(U, OOB)}(\bX_i^{(U)}, \bTheta_M)\big)^2.
    \end{align*}
    The right hand side is expanded as follows:
    \begin{align*}
        \hat{v}_{M,n}(U) = 1 - \frac{1}{n \hat{\sigma}_Y} \sum_{i=1}^{n}& \big(m(\bX_i) + \varepsilon_i - m_{M,n}^{(U, OOB)}(\bX_i^{(U)}, \bTheta_M)\big)^2 \\
        = 1 - \frac{1}{n \hat{\sigma}_Y} \sum_{i=1}^{n}& \big(m(\bX_i) - \E[m(\bX_i)|\bX_i^{(U)}] + \varepsilon_i \\ & - [m_{M,n}^{(U, OOB)}(\bX_i^{(U)}, \bTheta_M) - \E[m(\bX_i)|\bX_i^{(U)}]] \big)^2. \\
    \end{align*}
    Therefore,
    \begin{align} \label{eq_decomposition}
        \hat{v}_{M,n}(U) = 1 - &\frac{1}{n \hat{\sigma}_Y} \sum_{i=1}^{n} (m(\bX_i) - \E[m(\bX_i)|\bX_i^{(U)}])^2 \nonumber \\ \nonumber &+ \varepsilon_i^2 + 2 \varepsilon_i \times (m(\bX_i) - \E[m(\bX_i)|\bX_i^{(U)}]) \nonumber \\ 
        & - 2 \varepsilon_i \times \big(m_{M,n}^{(U, OOB)}(\bX_i^{(U)}, \bTheta_M) - \E[m(\bX_i)|\bX_i^{(U)}]\big) \nonumber \\ 
        & - 2 (m(\bX_i) - \E[m(\bX_i)|\bX_i^{(U)}]) \times \big(m_{M,n}^{(U, OOB)}(\bX_i^{(U)}, \bTheta_M) - \E[m(\bX_i)|\bX_i^{(U)}]\big) \nonumber \\ 
        & + \big(m_{M,n}^{(U, OOB)}(\bX_i^{(U)}, \bTheta_M) - \E[m(\bX_i)|\bX_i^{(U)}]\big)^2.
    \end{align}
    Now, using the law of large numbers, we obtain
    \begin{align*}
        \frac{1}{n} \sum_{i=1}^{n} (m(\bX_i) -& \E[m(\bX_i)|\bX_i^{(U)}])^2 + \varepsilon_i^2 \\[-1em] & + 2 \varepsilon_i \times (m(\bX_i) - \E[m(\bX_i)|\bX_i^{(U)}])
        \overset{p}{\longrightarrow} \E[\V[m(\bX) | \bX^{(U)}]] + \V[\varepsilon],
    \end{align*}
    and also $\hat{\sigma}_Y \overset{p}{\longrightarrow} \V[Y]$.
    Combining these two limits, we have
    \begin{align*}
        1 - \frac{1}{n \hat{\sigma}_Y} \sum_{i=1}^{n} (&m(\bX_i) - \E[m(\bX_i)|\bX_i^{(U)}])^2 + \varepsilon_i^2 \\[-1em] & + 2 \varepsilon_i \times (m(\bX_i) - \E[m(\bX_i)|\bX_i^{(U)}])
        \overset{p}{\longrightarrow}  1 - (\E[\V[m(\bX) | \bX^{(U)}]] + \V[\varepsilon])/\V[Y].
    \end{align*}
    Rewriting this limit using the law of total variance, we are led to
    \begin{align*}
        1 - (\E&[\V[m(\bX) | \bX^{(U)}]] + \V[\varepsilon])/\V[Y] \\
        & = (\V[Y] - \E[\V[m(\bX) | \bX^{(U)}]] + \V[\varepsilon])/\V[Y] \\ 
        & = (\V[m(\bX)] + \V[\varepsilon] - \E[\V[m(\bX) | \bX^{(U)}]] - \V[\varepsilon])/\V[Y] \\
        & = \V[\E[m(\bX) | \bX^{(U)}]]/\V[Y] \\
        & = \V[\E[Y | \bX^{(U)}]]/\V[Y] \\
        & = v^{\star}(U).
    \end{align*}
    
    Overall, the result of the lemma holds if the last three terms of the decomposition (\ref{eq_decomposition}) converge towards $0$ in probability. This is clearly true if the OOB PRF estimate is $\mathbb{L}^2$-consistent, that is for $i \in \{1,\hdots,n\}$,
    \begin{align*}
        \E\big[ \big(m_{M,n}^{(U, OOB)}(\bX_i^{(U)}, \bTheta_M) - \E[m(\bX_i)|\bX_i^{(U)}]\big)^2 \big] 
        \longrightarrow 0.
    \end{align*}
    
    According to Lemma $2$ from \citet{benard2021mda}, the $\mathbb{L}^2$-convergence of the OOB forest estimate follows from the convergence of the standard forest estimate. Therefore, we only need to show the $\mathbb{L}^2$-convergence of the PRF estimate to get the final result. To do so, we adapt the proof of Theorem $1$ from \citet{scornet2015consistency}, which shows the convergence of Breiman's forests for additive models.
 
    The proof only differs for the approximation error. Indeed, we need to show that the variation of the regression function vanishes in a cell of the empirical PRF. \citet{scornet2015consistency} show that this is always true in the original forest for additive models. Here, the result is valid for all regression functions, using the fact that the random forest is slightly modified: splits cannot be too close from the edges of cells (at least a fraction of $\gamma$ observations in children nodes), and $mtry$ is set to $1$ at each node with a small probability $\delta$.
    Under these small modifications, Lemma $2$ from \citet{meinshausen2006quantile} gives that the diameter of each cell of the original forest vanishes, i.e, 
    \begin{align*}
        \lim \limits_{n \to \infty} \textrm{diam}(A_n(\bX, \Theta)) = 0,
    \end{align*}
    where $A_n(\bX, \Theta)$ is the cell of the forest where the new query point $\bX$ falls, and the diameter of a cell $A$ is the length of the longest line fitting in $A$, formally
    \begin{align*}
        \textrm{diam}(A) = \sup_{\bx,\bx' \in A} ||\bx - \bx'||_2.
    \end{align*}
    By definition of the PRF algorithm, the projected cell where $\bX^{(U)}$ falls is included in $A_n(\bX, \Theta)$, and therefore the diameter of the projected cell also vanishes as $n$ increases. Additionally, the regression function $m$ is continuous by Assumption (A1), and consequently the approximation error converges to $0$.
    Finally, the PRF estimate is $\mathbb{L}^2$-consistent, and we deduce the final result, 
    \begin{align*}
        \hat{v}_{M,n}(U) \overset{p}{\longrightarrow} v^{\star}(U).
    \end{align*}
\end{proof}

\begin{proof}[Proof of Lemma \ref{lemma_loss}]

    The loss function $\ell_{M,n}$ contains three sources of randomness: the data $\Dn$, the forest randomization $\Theta$, and the importance sampling of the subsets $U$. The discrete distribution used to sample the subsets $U$ is built using the occurrence frequency in the forest $\hat{p}_{M,n}(U)$, which depends on $\Dn$ and $\Theta$. This subtle relation between the data, the forest, and the importance sampling prevent a straightforward proof for this lemma.
    We reshape the loss function and use the law of total variance to handle separately the multiple sources of randomness. We assume that Assumptions (A1), (A2), and (A3) are satisfied.
    
    First, we have
    \begin{align*}
        \ell_{M,n}(\beta) &= \frac{1}{K_n} \sum_{U \in \bU} \frac{w(U)}{\hat{p}_{M,n}(U)} (\hat{v}_{M,n}(U) - \beta^T I(U))^2\\
        &= \sum_{U \subset \{1,\hdots,p\}} \frac{w(U) N_n(U)}{K_n \hat{p}_{M,n}(U)} \mathds{1}_{\hat{p}_{M,n}(U) > 0} (\hat{v}_{M,n}(U) - \beta^T I(U))^2,
    \end{align*}
    where $N_n(U)$ is the number of times where $U$ is drawn in $\bU$ (with the convention $0/0 = 0$). Since the sum is finite, it is enough to study the convergence of the terms one by one. Let us consider a given variable subset $U$.
    First, we define
    \begin{align*}
        \Delta_{n,K_n} = \frac{N_n(U) \mathds{1}_{\hat{p}_{M,n}(U) > 0}}{K_n \hat{p}_{M,n}(U)}.
    \end{align*}
    Next, we derive the limit of $\V[\Delta_{n,K_n}]$ using the law of total variance. We have
    \begin{align*}
        \V[\Delta_{n,K_n}] = \E[\V[\Delta_{n,K_n}|\Dn,\Theta]] + \V[\E[\Delta_{n,K_n}|\Dn,\Theta]].
    \end{align*}
    On one hand, since $K_n$ is a constant and $\hat{p}_{M,n}(U)$ only depends on $\Dn$ and $\Theta$, we have
    \begin{align*}
        \V[\Delta_{n,K_n}|\Dn,\Theta] = \V\big[\frac{N_n(U) \mathds{1}_{\hat{p}_{M,n}(U) > 0}}{K_n \hat{p}_{M,n}(U)}|\Dn,\Theta \big]
        = \Big(\frac{\mathds{1}_{\hat{p}_{M,n}(U) > 0}}{K_n \hat{p}_{M,n}(U)}\Big)^2 \V\big[N_n(U) |\Dn,\Theta \big].
    \end{align*}
    By definition, $N_n(U) = \sum_{k = 1}^{K_n} \mathds{1}_{U_k = U}$, where $U_1,\hdots,U_{K_n}$ are the variable subsets drawn at each iteration of the importance sampling. 
    Since $U_1,\hdots,U_{K_n}$ are independent conditional on $\Dn$ and $\Theta$, and $U$ is drawn with probability $\hat{p}_{M,n}(U)$,
    \begin{align*}
        \V\big[N_n(U) |\Dn,\Theta \big] = K_n \V[\mathds{1}_{U_1 = U} | \Dn,\Theta] = K_n \hat{p}_{M,n}(U) [1 - \hat{p}_{M,n}(U)],
    \end{align*}
    and finally
    \begin{align*}
        \E[\V[\Delta_{n,K_n}|\Dn,\Theta]] = \frac{1}{K_n} \E\big[\frac{1 - \hat{p}_{M,n}(U)}{\hat{p}_{M,n}(U)} \mathds{1}_{\hat{p}_{M,n}(U) > 0}\big].
    \end{align*}
    Therefore,
    \begin{align*}
        \E[\V[\Delta_{n,K_n}|\Dn,\Theta]] \leq \frac{1}{K_n} \E\big[\frac{\mathds{1}_{\hat{p}_{M,n}(U) > 0}}{\hat{p}_{M,n}(U)}\big].
    \end{align*}
    The number of paths in the forest is upper bounded by $n \times M_n$, and therefore if $\hat{p}_{M,n}(U)$ is not null, it is lower bounded by $1/(n.M_n)$. Thus
    \begin{align*}
        \E[\V[\Delta_{n,K_n}|\Dn,\Theta]] \leq \frac{n.M_n}{K_n},
    \end{align*}
    which converges to $0$ by Assumption (A3).
    
    On the other hand,
    \begin{align*}
        \E[\Delta_{n,K_n}|\Dn,\Theta] = 
        \frac{\mathds{1}_{\hat{p}_{M,n}(U) > 0}}{K_n \hat{p}_{M,n}(U)} \E[N_n(U)|\Dn,\Theta] = \mathds{1}_{\hat{p}_{M,n}(U) > 0},
    \end{align*}
    and then 
    \begin{align*}
        \V[\E[\Delta_{n,K_n}|\Dn,\Theta]] =& \P(\hat{p}_{M,n}(U) > 0)[1 - \P(\hat{p}_{M,n}(U) > 0)] \\
        =& \P(\hat{p}_{M,n}(U) > 0)\P(\hat{p}_{M,n}(U) = 0).
    \end{align*}
   Lemma \ref{lemma_proba} gives that $\P\big( \hat{p}_{M,n}(U) = 0 \big) \longrightarrow 0$, which implies the convergence of $\V[\E[\Delta_{n,K_n}|\Dn,\Theta]]$ towards $0$. 
    
    Overall, the law of total variance gives that
    \begin{align*}
        \V[\Delta_{n,K_n}] \longrightarrow 0.
    \end{align*}
    Since $\E[\Delta_{n,K_n}] = \P\big( \hat{p}_{M,n}(U) > 0 \big) \longrightarrow 1$ and $\mathbb{L}^2$-convergence implies convergence in probability, we have
    \begin{align*}
        \Delta_{n,K_n} \overset{p}{\longrightarrow} 1.
    \end{align*}

    Next, using Lemma \ref{lemma_proj}, we obtain
    \begin{align*}
        \frac{w(U) N_n(U)}{K_n \hat{p}_{M,n}(U)} \mathds{1}_{\hat{p}_{M,n}(U) > 0} (\hat{v}_{M,n}(U) - \beta^T I(U))^2 \overset{p}{\longrightarrow} 
        w(U) (v^{\star}(U) - \beta^T I(U))^2.
    \end{align*}
    If $Z$ is a discrete random variable taking values in the set of all subsets of $\{1,\hdots,p\}$, excluding the full and empty sets, and distributed with the scaled weights $w(U)$, we finally have
    \begin{align*}
        \ell_{M,n}(\beta) \overset{p}{\longrightarrow} \E[(v^{\star}(Z) - \beta^T I(Z))^2].
    \end{align*}
\end{proof}

\section{Formulas of Theoretical Shapley Effects for Experiments} \label{appendix_D}

\paragraph{Experiment 1.}
For a linear model with a Gaussian input vector of dimension $p$, the theoretical Shapley effects are given by Theorem $2$ in \citep{owen2017shapley} as
\begin{align*}
    Sh^{\star}(X^{(j)}) = \frac{1}{p} \sum_{U \subset \{1,\hdots,p\} \setminus j} {p - 1 \choose |U|}^{-1} \frac{\textrm{Cov}[X^{(j)}, \bX^{(-U) T} \beta^{(-U)} | \bX^{(U)}]^2}{\V[X^{(j)}|\bX^{(U)}]} \Big(1 - \frac{\sigma_{\varepsilon}^2}{\V[Y]}\Big),
\end{align*}
where the conditional covariances and variances can be easily computed using standard formulas for Gaussian vectors, and $\sigma_{\varepsilon}^2$ is the noise variance.

In Experiment 1, several copies of a given input $X^{(k)}$ are added to the data. We denote by $r$ the number of redundant variables. We easily deduce the updated value $Sh'^{\star}(X^{(j)})$ from the original Shapley effects $Sh^{\star}(X^{(j)})$ for all variables. Then, we have
\begin{align*}
    Sh'^{\star}(X^{(k)}) = \frac{1}{p + r} \sum_{U \subset \{1,\hdots,p\} \setminus k} {p + r - 1 \choose |U|}^{-1} \frac{\textrm{Cov}[X^{(k)}, \bX^{(-U) T} \beta^{(-U)} | \bX^{(U)}]^2}{\V[X^{(k)}|\bX^{(U)}]} \Big(1 - \frac{\sigma_{\varepsilon}^2}{\V[Y]}\Big).
\end{align*}

If $j \in \{1,\hdots,p\} \setminus k$, we have
\begin{align*}
    Sh'^{\star}(X^{(j)}) = \frac{1}{p + r} \sum_{\small{\begin{array}{c}
        U \subset \{1,\hdots,p\} \setminus j \\
        \textrm{s.t. } k \notin U\end{array}}}& {p + r - 1 \choose |U|}^{-1} \frac{\textrm{Cov}[X^{(j)}, \bX^{(-U) T} \beta^{(-U)} | \bX^{(U)}]^2}{\V[X^{(j)}|\bX^{(U)}]} \Big(1 - \frac{\sigma_{\varepsilon}^2}{\V[Y]}\Big) \\
        + \frac{1}{p + r} \sum_{\small{\begin{array}{c}
        U \subset \{1,\hdots,p\} \setminus j \\
        \textrm{s.t. } k \in U\end{array}}} 
        & \Big[ \sum_{\ell=0}^{r} {r \choose \ell}{p + r - 1 \choose |U| + \ell}^{-1}
              + \sum_{\ell=1}^{r} {r \choose \ell}{p + r - 1 \choose |U| + \ell - 1}^{-1} \Big] \\
        \\[-3em] & \times \frac{\textrm{Cov}[X^{(j)}, \bX^{(-U) T} \beta^{(-U)} | \bX^{(U)}]^2}{\V[X^{(j)}|\bX^{(U)}]} \Big(1 - \frac{\sigma_{\varepsilon}^2}{\V[Y]}\Big).
\end{align*}

Finally, for $j \in \{p+1, \hdots, p+r\}$, clearly
\begin{align*}
    Sh'^{\star}(X^{(j)}) = Sh'^{\star}(X^{(k)}),
\end{align*}
and dummy variables have a null Shapley effect.

\paragraph{Experiment 2.}
Recall that in the second experiment, we consider two independent blocks of $5$ interacting variables. The input vector is Gaussian, centered, and of dimension $10$. All variables have unit variance, and all covariances are null, except
$\textrm{Cov}(X^{(1)}, X^{(2)}) = \textrm{Cov}(X^{(6)}, X^{(7)}) = \rho_1$, and $\textrm{Cov}(X^{(4)}, X^{(5)}) = \textrm{Cov}(X^{(9)}, X^{(10)}) = \rho_2$. 
The output $Y$ is defined as a specific case of
\begin{align*}
 Y = a \sqrt{\alpha}& \times X^{(1)} X^{(2)} \mathds{1}_{X^{(3)} > 0} 
            + b \sqrt{\alpha} \times X^{(4)} X^{(5)} \mathds{1}_{X^{(3)} < 0} \\
    & + c \sqrt{\beta} \times X^{(6)} X^{(7)} \mathds{1}_{X^{(8)} > 0} + d \sqrt{\beta} X^{(9)} X^{(10)} \mathds{1}_{X^{(8)} < 0} + \varepsilon.
\end{align*}

The Shapley effects of the input variables are given by
\begin{align*}
    Sh^{\star}(X^{(1)}) = Sh^{\star}(X^{(2)}) = \frac{\alpha}{\alpha V_1 + \beta V_2 + \sigma^2_{\varepsilon}} \Big( \frac{(a\rho_1)^2}{8} + \frac{5}{24}a^2 \Big),
\end{align*}
\begin{align*}
    Sh^{\star}(X^{(4)}) = Sh^{\star}(X^{(5)}) = \frac{\alpha}{\alpha V_1 + \beta V_2 + \sigma^2_{\varepsilon}} \Big( \frac{(b\rho_2)^2}{8} + \frac{5}{24}b^2 \Big),
\end{align*}
\begin{align*}
    Sh^{\star}(X^{(3)}) = \frac{\alpha}{\alpha V_1 + \beta V_2 + \sigma^2_{\varepsilon}} \Big( \frac{(a\rho_1 - b\rho_2)^2}{4} 
    + \frac{(a\rho_1)^2}{4} + \frac{(b\rho_2)^2}{4} + \frac{a^2}{12} + \frac{b^2}{12} \Big),
\end{align*}
where 
\begin{align*}
   V_1 = \Big( \frac{(a\rho_1 - b\rho_2)^2}{4} 
    + \frac{(a\rho_1)^2}{2} + \frac{(b\rho_2)^2}{2} + \frac{a^2}{2} + \frac{b^2}{2} \Big),
\end{align*}
and 
\begin{align*}
   V_2 = \Big( \frac{(c\rho_1 - d\rho_2)^2}{4} 
    + \frac{(c\rho_1)^2}{2} + \frac{(d\rho_2)^2}{2} + \frac{c^2}{2} + \frac{d^2}{2} \Big).
\end{align*}
Symmetrically, we have
\begin{align*}
    Sh^{\star}(X^{(6)}) = Sh^{\star}(X^{(7)}) = \frac{\beta}{\alpha V_1 + \beta V_2 + \sigma^2_{\varepsilon}} \Big( \frac{(c\rho_1)^2}{8} + \frac{5}{24}c^2 \Big),
\end{align*}
\begin{align*}
    Sh^{\star}(X^{(9)}) = Sh^{\star}(X^{(10)}) = \frac{\beta}{\alpha V_1 + \beta V_2 + \sigma^2_{\varepsilon}} \Big( \frac{(d\rho_2)^2}{8} + \frac{5}{24}d^2 \Big),
\end{align*}
\begin{align*}
    Sh^{\star}(X^{(8)}) = \frac{\beta}{\alpha V_1 + \beta V_2 + \sigma^2_{\varepsilon}} \Big( \frac{(c\rho_1 - d\rho_2)^2}{4} 
    + \frac{(c\rho_1)^2}{4} + \frac{(d\rho_2)^2}{4} + \frac{c^2}{12} + \frac{d^2}{12} \Big).
\end{align*}
Clearly, $ Sh^{\star}(X^{(11)}) = Sh^{\star}(X^{(12)}) = Sh^{\star}(X^{(13)}) = Sh^{\star}(X^{(14)}) =Sh^{\star}(X^{(15)}) = 0$.

\end{document}